\documentclass[11pt]{article} 
\usepackage[utf8]{inputenc}
\usepackage[T1]{fontenc}

\usepackage[top=3cm, bottom=3cm, left=3.17cm, right=3.17cm]{geometry}

\usepackage[round]{natbib}
\usepackage{framed}
\usepackage{amsthm}
\usepackage{amsmath}
\usepackage{amsfonts}
\usepackage{amssymb}
\usepackage{enumerate}
\usepackage{bbm}
\usepackage{graphicx}
\usepackage{subcaption}
\usepackage{float}
\usepackage{color}
\usepackage{caption}
\usepackage{microtype}

\usepackage{hyperref}

\usepackage[capitalise]{cleveref}

\usepackage[dvipsnames]{xcolor}
\newtheorem{assumption}{Assumption}

\hypersetup{
    colorlinks = true,
    citecolor=blue,
    urlcolor=blue,
    linkcolor=blue
}

\newtheorem{ass}{Assumption}
\newcommand{\ba}{\begin{ass}}
\newcommand{\ea}{\end{ass}}
\newtheorem{theorem}{Theorem}

\newtheorem{remark}[theorem]{Remark}

\newtheorem{proposition}[theorem]{Proposition}

\newtheorem{lemma}[theorem]{Lemma}

\newcommand{\jmlrBlackBox}{\rule{1.5ex}{1.5ex}}
 
\newcommand{\jmlrQED}{\hfill\jmlrBlackBox\par\bigskip}
  \renewenvironment{proof}%
  {%
   \par\noindent{\bfseries\upshape Proof\ }%
  }%
  {\jmlrQED}


\usepackage{notations}

\makeatletter
\def\set@curr@file#1{\def\@curr@file{#1}} 
\makeatother
\usepackage[load-configurations=version-1]{siunitx} 



\title{Near-optimal estimation of smooth transport maps with kernel sums-of-squares}


\author{ Boris Muzellec $^{\star *}$ $\quad$ Adrien Vacher $^{\circ *}$ $\quad$ Francis Bach $^{\star *}$   \\ Fran\c{c}ois-Xavier Vialard $^\circ$ $\quad$ Alessandro Rudi $^{\star *}$
}

\date{
{\small 
		    $^\circ$ LIGM, Université Gustave Eiffel, CNRS \\
			$^*$ INRIA Paris, 2 rue Simone Iff, 75012, Paris, France \\
			$^\star$ ENS - Département d’Informatique de l’École Normale Supérieure, \\
		$^\star$ PSL Research University, 2 rue Simone Iff, 75012, Paris, France \\
		\texttt{boris.muzellec@inria.fr, adrien.vacher@u-pem.fr, francis.bach@inria.fr, francois-xavier.vialard@u-pem.fr, alessandro.rudi@inria.fr}
}\\
}

\begin{document}

\maketitle

\begin{abstract}
    It was recently shown that under smoothness conditions, the squared Wasserstein distance between two distributions could be efficiently computed with appealing statistical error upper bounds. However, rather than the distance itself, the object of interest in statistical and machine learning applications is the underlying optimal transport map. Hence, computational and statistical guarantees need to be obtained for the estimated maps themselves. In this paper, we propose the first tractable algorithm for which the statistical $L^2$ error on the maps nearly matches the existing minimax lower-bounds for smooth map estimation. Our method is based on solving the semi-dual formulation of optimal transport with an infinite-dimensional sum-of-squares reformulation, and leads to an algorithm which has dimension-free polynomial rates in the number of samples, with potentially exponentially dimension-dependent constants.
\end{abstract}

\section{Introduction}\label{sec:intro}
Optimal transport (OT) provides a principled method to compare probability distributions, by finding the optimal way of coupling one to another based on a cost function on their supports. This optimization problem yields two useful quantities: the transport cost itself, which corresponds to the Wasserstein distance when the ground cost is a distance, and the minimizer, which is a map that pushes forward the first distribution onto the second, known as the transport map. While OT has gained attention lately in the statistics and machine learning communities due to the properties of the Wasserstein distance, transportation maps are playing an increasingly important role in data sciences. Indeed, many applications such as generative modeling \citep{arjovsky2017wasserstein,salimans2018improving,bernton2017inference,makkuva2020optimal,onken2021ot}, domain adaptation \citep{courty2016optimal,courty2017joint}, shape matching \citep{su2015optimal,Feydy2017} or predicting cell trajectories \citep{schiebinger2019optimal, yang2020predicting} among others can be formulated as the problem of finding a map from a reference distribution to a target distribution. 

\par Over the past decades, efforts have particularly concentrated on the problem of computing  OT distances. 
Two cases must be distinguished: the case of discrete measures (i.e.,  measures supported on a finite number of points), and the general case, including e.g.\ measures that admit a density with respect to the Lebesgue measure. For discrete measures supported on $N$ points, OT distances and plans may be computed by solving a linear program (LP) in $O(N^3\log(N))$ time using the network simplex algorithm~\citep[see, e.g.,][]{ahyja1993network}. Using entropic optimal transport \citep{cuturi2013sinkhorn} as a proxy, this computational cost can be further reduced to $O(N^2)$.
However, efficient methods for general measures remain to be found. The naive approach, which consists in using the OT distance between samples of the distributions (the so-called plugin estimator), fails as the dimension grows. Indeed,
the sharpest statistical bounds known for the plugin estimator require $\varepsilon^{-\frac{2}{d}}$ samples to achieve precision $\varepsilon$ \citep{chizat2020faster}. Yet, theoretical estimators were recently derived in the case where the problem is smooth \citep{weed2019estimation,hutter2019minimax}, yielding rates of estimation of the OT distance that improve as the smoothness grows. However, the estimators presented in those works are not algorithmically tractable. Very recently, \citet{vacher2021dimensionfree} closed this statistical-computational gap by designing an estimator of the squared Wasserstein distance relying on kernel sums-of-squares~(\citet{marteau2020non}, and in particular \cite{rudi2020global}) that may be computed with polynomial dimension-free rates in the number of samples, but with a constant term that is potentially exponential in the dimension.

\par On the other hand, fewer results on computationally efficient statistical estimators of optimal maps are available in the literature.
Compared to the problem of estimating OT distances, the estimation of an optimal transport map based on samples is particularly difficult since the estimated maps need to be evaluated on out-of-sample points. 
When smoothness assumptions are made, one can design theoretical statistical estimators of the transport maps that are statistically minimax optimal with respect to the smoothness for the $L^2$ error \citep{hutter2019minimax}, but such estimators cannot be computed in practice as they require projecting on the space of smooth and strongly convex functions, which is NP-hard. In a few empirical studies, the potentials were explicitly parameterized either by a neural network \citep{seguy2018maps} or by a Gaussian kernel \citep{genevay2016stochastic}. However, neither computational nor statistical guarantees are provided in those works. Likewise, \citet{paty2020convexity} recently managed to derive an explicit algorithm when the potentials are assumed to have smooth gradients and to be strongly convex, but the corresponding rate of approximation is not known.
A recent line of research studies estimators of optimal maps that rely on barycentric projection, which can be developed in regularized and non-regularized settings \citep{gunsilius_2021,deb2021rates,pooladian2021entropic}. These estimators can be used without any further assumption on the source and target measures. Here again, either the rates do match minimax rates of estimation of \citet{hutter2019minimax} without computationally feasible estimators, or the rates are not optimal. In particular, the estimator of \citet{pooladian2021entropic} is based on entropic regularization and is computationally friendly, but only works for low regularity of the maps, and is not minimax optimal.
Likewise, \cite{manole2021plugin} provide estimators for transportation maps that achieve minimax statistical rates both in smooth and non-smooth regimes. In the smooth regime, which is the setting that we consider in the present paper, \citeauthor{manole2021plugin} propose a map estimator for smooth distributions that are supported on the torus. This estimator is obtained by first performing sample-based wavelet density estimation for the source and target distributions, and then computing the optimal transport map between the estimated densities. While \citeauthor{manole2021plugin} show that this estimator achieves the minimax rates of \citet{hutter2019minimax}, it remains untractable due to the difficulty of computing the OT map between the two smooth distributions obtained from wavelet density estimation.

\paragraph{Contributions.} The contributions of this paper are twofold. First, relying on the sum-of-squares (SoS) tight reformulation of OT proposed by \cite{vacher2021dimensionfree}, we bridge the statistical-computational gap on the estimation of the optimal potentials. Second, we propose improvements on the algorithms proposed by \cite{vacher2021dimensionfree} to enhance the practicality of SoS for optimal transport. 
\begin{enumerate}
    \item \textbf{An estimator for smooth OT potentials.} Relying on an SoS representation of Brenier dual constraints, we design an algorithm to compute an estimator of the OT potentials based on samples. Remarkably, the complexity of this algorithm has polynomial rates in the number of samples (with potentially exponential constants w.r.t.\ the underlying dimension). For this estimator, we derive a statistical upper bound on the $L^2$ error on the resulting maps that nearly matches the lower bound of \citet{hutter2019minimax} when the smoothness is high. To achieve these nearly optimal rates, we refine the regularization that was proposed by \cite{vacher2021dimensionfree} and carry out a careful analysis to bound the solutions of the empirical problem. The central argument consists in using the local strong convexity of the so-called semi-dual formulation of optimal transport. It enables us to relate the $L^2$ error both to the quality of the (scalar) approximation of the original problem and to the norm of the empirical minimizers.
    In turn, this allows us to obtain a statistical rate for the $L^2$ error with well-chosen regularizers.
    \item \textbf{Algorithmic improvements.} We propose several improvements on the algorithms introduced by \cite{vacher2021dimensionfree}.
    First, as our objective is to compute an estimator of the OT potentials themselves, our algorithm should provide convergence guarantees for the minimizing solution, and not only the minimum. To do so, we reformulate the objective as a strongly convex optimization problem by using a square norm regularization instead of the trace, and by replacing hard constraints with quadratic penalties. A second advantage of this strongly convex relaxation is that it allows using first order optimization methods such as accelerated gradient descent, which are more scalable than the Newton methods employed by \cite{vacher2021dimensionfree} whose cost quickly blows up with the number of samples. Next, we further reduce computational costs by introducing a Nyström approximation of the features in the dual constraints. Finally, we propose a criterion to choose the hyperparameters on which our estimator relies, which was lacking in the work of \citeauthor{vacher2021dimensionfree}
\end{enumerate}

\section{Notations and background}

Let $(\mu, \nu)$ be two probability distributions on two bounded domains $X, Y \subset \mathbb{R}^d$. We study the squared Wasserstein distance, whose dual Kantorovitch formulation is given by
 \begin{align}
 \begin{split}
    \label{EqDualKantorovitchOT}
    W_2^2(\mu, \nu)  =   \sup_{u,v \in C(\R^d)} ~&~~ \langle u, \mu \rangle  + \langle v, \nu \rangle \\
    \textrm{subject to}  ~&~~  \frac{\|x-y\|^2}{2} \geq u(x) + v(y), ~~ \forall (x,y) \in X \times Y \, ,
\end{split}
\end{align}
where $C(\mathbb{R}^d)$ is the space of continuous functions on $\mathbb{R}^d$. For convenience, we shall use the equivalent Brenier formulation, where the Kantorovitch potentials $u, v$ are replaced by the Brenier potentials $f(\cdot):= \frac{\|\cdot\|^2}{2} - u(\cdot), g(\cdot):= \frac{\|\cdot\|^2}{2} - v(\cdot)$ 
 \begin{align}
 \begin{split}
    \label{EqDualBrenierOT}
    \OT(\mu, \nu)  =   \inf_{f,g \in C(\R^d)} ~&~~ \langle f, \mu \rangle  + \langle g, \nu \rangle \\
    \textrm{subject to}  ~&~~ f(x) + g(y) \geq \langle x, y \rangle, ~~ \forall (x,y) \in X \times Y \, .
\end{split}
\end{align}
These two quantities are related by $W_2^2(\mu, \nu) = \langle \frac{\|.\|^2}{2}, \mu + \nu \rangle - \OT(\mu, \nu)$. 
\par Problem \eqref{EqDualBrenierOT} (as well as problem \eqref{EqDualKantorovitchOT}) is delicate to solve numerically due to the non-negativity constraint which has to be satisfied on a continuous set. For instance, a feasible idea consists in sampling the inequality constraint on a finite set and trying to extend it to the continuous set. Unfortunately, this strategy may only leverage Lipschiztness, even if the functions involved are smoother than Lipschitz, yielding an approximation rate of order $n^{-\frac{1}{2d}}$, $2d$ being the dimension of $X \times Y$. The method proposed by \cite{vacher2021dimensionfree} overcomes this difficulty and is able to leverage the smoothness of the potentials. Following their work, we require assumptions on the support and the smoothness of the densities themselves.


\begin{assumption}[$m$-times differentiable densities]\label{assum:measures}
Let $m, d \in \NN$. Let $\mu, \nu \in {\cal P}(\R^d)$.
\begin{enumerate}[a)]
    \item \label{assum:support_sets} $\mu, \nu$ have densities. Their supports, resp. $X, Y \subset \RR^d$ are convex, bounded and open with a Lipschitz boundary;
    \item \label{assum:smooth_densties} the densities are finite, bounded away from zero, with Lipschitz  derivatives up to order $m$.
\end{enumerate}
\end{assumption}

Using Cafarelli's regularity theory \citep{philippis2013mongeampre}, \Cref{assum:measures} ensures that the Brenier potentials have a similar order of differentiability. In particular, defining the Sobolev space of order $m$ over $Z\subset \mathbb{R}^d$
\begin{equation}
    H^{m}(Z) := \Big\{f \in L^2(Z) ~|~ \|f\|_{H^{m}}:= \sum_{|\alpha| \leq m} \|D^\alpha f\|_{L^2(Z)} < \infty \Big\} \, ,
\end{equation}
the optimal Brenier potentials $(f_*, g_*)$ belong to the Sobolev space $H^{m+2}(X)$ and $H^{m+2}(Y)$ respectively \citep{philippis2013mongeampre}. When $m > d/2 - 2$, the Sobolev embedding theorem gives that these spaces are continuously embedded in the space of continuous functions and thus are reproducing kernel Hilbert spaces~\citep[RKHS, see e.g.][]{paulsen2016introduction,steinwart2008support}.

\par The approach proposed by \citeauthor{vacher2021dimensionfree} to estimate the Wasserstein distance is based on kernel Sum-Of-Squares (SoS) and the tools introduced by \cite{rudi2020global} to deal with optimization problems subject to a dense set of inequalities. The procedure consists in two steps: (1) show that the optimization problem is equivalent to a problem where the inequality constraint is substituted by an equality constraint with respect to an SoS term; (2) consider an empirical version of the resulting problem with only a finite number of equality constraints. \cite{rudi2020global} shows, for the case of non-convex optimization, that this procedure is adaptive to the degree of differentiability of the objective function leading to rates that overcome the curse of dimensionality for very smooth objectives. \cite{vacher2021dimensionfree} shows that for the case of the problem in \eqref{EqDualBrenierOT}, the two steps correspond to the following \cref{EqTightReformBrenierOT} and \cref{eq:EmpiricalDualBrenier} as reported in the following theorem and below.

\begin{theorem}[\citet{vacher2021dimensionfree}]
\label{thm:SoS-OT}
Let $\mu \in \mathcal{P}(X), \nu \in \mathcal{P}(Y)$ satisfy \Cref{assum:measures} and assume that $m>d+1$. Then, problem \eqref{EqDualBrenierOT} can be reformulated as
\begin{equation}
     \begin{split}
    \label{EqTightReformBrenierOT}
    \OT(\mu, \nu)  =  & \underset{\substack{f,g \in C(\R^d) \\ A \in \mathbb{S}^{+}(H^{m}(X \times Y))}}{\inf} ~~ \langle f, \mu \rangle  + \langle g, \nu \rangle \\
    & \textrm{subject to} ~  f(x) + g(y) = \langle x, y \rangle + \langle \phi(x,y), A\phi(x,y) \rangle_{H^{m}(X \times Y))} \, ,
\end{split}
\end{equation}
where $\mathbb{S}^{+}(H^{m}(X \times Y))$ is the set of linear, positive, self-adjoint operators on $H^{m}(X \times Y)$.
\end{theorem}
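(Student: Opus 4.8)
The plan is to show the two optimization problems have the same value by establishing inequalities in both directions, with the crux being a tightness argument that the SoS representation is expressive enough to capture the optimal constraint slack.

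First I would show that the reformulation \eqref{EqTightReformBrenierOT} lower-bounds \eqref{EqDualBrenierOT}, i.e., $\eqref{EqTightReformBrenierOT} \geq \eqref{EqDualBrenierOT}$. This is the easy direction: any feasible triple $(f,g,A)$ for \eqref{EqTightReformBrenierOT} yields a feasible pair $(f,g)$ for \eqref{EqDualBrenierOT}, since $A \in \mathbb{S}^{+}(H^{m}(X\times Y))$ implies $\langle \phi(x,y), A\phi(x,y)\rangle \geq 0$ for all $(x,y)$, so the equality constraint forces $f(x) + g(y) \geq \langle x, y\rangle$ pointwise. The objectives coincide, so the infimum over the smaller feasible set is at least as large.

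Next, and this is where the work lies, I would show the reverse inequality $\eqref{EqTightReformBrenierOT} \leq \eqref{EqDualBrenierOT}$. The natural approach is to take the optimal Brenier potentials $(f_*, g_*)$, which by Cafarelli regularity (invoking \Cref{assum:measures} and $m > d+1$) lie in $H^{m+2}(X) \subset H^m(X)$ and $H^{m+2}(Y) \subset H^m(Y)$ respectively, and define the slack function $r(x,y) := f_*(x) + g_*(y) - \langle x, y\rangle$. This $r$ is non-negative on $X\times Y$ and inherits regularity of order $m+2$ (minus the smooth bilinear term); the goal is to exhibit a positive operator $A_*$ such that $r(x,y) = \langle \phi(x,y), A_*\phi(x,y)\rangle_{H^m(X\times Y)}$. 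This requires a sum-of-squares decomposition of $r$ together with the fact that squares of $H^{m+1}$-type functions remain in $H^m$ (an algebra/multiplication property of Sobolev spaces, valid here because $m > d/2$, so $H^{\lceil m/2\rceil + \text{something}}$ is an algebra), plus a construction that encodes such squares as rank-one positive operators $h \otimes h$ acting on the feature map $\phi$. The essential analytic input is that a smooth non-negative function on a nice domain admits a representation as a (possibly infinite) positive-semidefinite quadratic form in the kernel feature map with the operator landing in the right trace-class-or-bounded space; this is exactly the machinery from \cite{rudi2020global} and its adaptation in \cite{vacher2021dimensionfree}, and I would cite it rather than reprove it.

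**The main obstacle** is controlling the regularity of the SoS certificate: one must ensure that the operator $A_*$ representing the slack $r$ is a genuine element of $\mathbb{S}^{+}(H^m(X\times Y))$ — i.e., self-adjoint, positive, and bounded on the RKHS — and not merely a formal quadratic form. This is delicate because taking square roots of non-negative smooth functions can destroy smoothness at the zero set of $r$ (the contact set where the transport constraint is active), so one typically cannot write $r = s^2$ with $s$ as smooth as $r$. The resolution, following \cite{rudi2020global}, is that one does not need a single square but rather an infinite sum $r = \sum_i s_i^2$ with summable norms, exploiting that $m > d+1$ gives enough excess smoothness (roughly $d/2 + 1$ derivatives to spare beyond continuity, via $r \in H^{m+2}$ and the Sobolev embedding). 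I would therefore organize the hard direction around: (i) verifying $r \in H^{m+2}(X\times Y)$ and $r \geq 0$; (ii) invoking the representation theorem for non-negative functions in Sobolev RKHS to get $A_* \succeq 0$ with $A_* \in \mathbb{S}^+(H^m)$; (iii) checking that $(f_*, g_*, A_*)$ is feasible with matching objective value, concluding $\eqref{EqTightReformBrenierOT} \leq \langle f_*,\mu\rangle + \langle g_*, \nu\rangle = \OT(\mu,\nu)$. Combining the two inequalities gives equality and finishes the proof.
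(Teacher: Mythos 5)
Note first that the paper does not prove this theorem; it cites it directly from \citet{vacher2021dimensionfree}, so there is no in-paper proof to compare against. Evaluating your sketch against what the cited proof actually does, your easy direction ($A \succeq 0 \Rightarrow$ feasibility for the inequality-constrained problem) is correct, and you correctly locate the difficulty in the hard direction in producing an operator $A_* \in \mathbb{S}^+(H^m(X\times Y))$ certifying the slack $r(x,y) = f_*(x)+g_*(y)-\langle x,y\rangle$. But there is a genuine gap in step (ii): you invoke a ``representation theorem for non-negative functions in Sobolev RKHS'' from \cite{rudi2020global} as a black box. That result requires the zero set of the non-negative function to consist of finitely many isolated points with positive-definite Hessian (a Morse-type assumption). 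Here the zero set of $r$ is the graph $\{(x,\nabla f_*(x)) : x \in X\}$, a $d$-dimensional submanifold of the $2d$-dimensional domain $X\times Y$, so the theorem does not apply as stated, and the obstruction is not merely one of bookkeeping a possibly infinite sum.

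The actual mechanism in \citeauthor{vacher2021dimensionfree}'s proof is structural: rewriting $r(x,y) = g_*(y) - g_*(\nabla f_*(x)) - \langle \nabla g_*(\nabla f_*(x)),\, y - \nabla f_*(x)\rangle$ exhibits $r$ as the Bregman divergence of the strongly convex, smooth potential $g_*$ between $y$ and $\nabla f_*(x)$. Taylor's theorem with integral remainder then gives $r(x,y) = (y-\nabla f_*(x))^\top H(x,y)(y-\nabla f_*(x))$ with $H$ a matrix-valued function that is uniformly positive definite (by strong convexity of $g_*$, which follows from Caffarelli regularity of both potentials) and inherits $H^{m}$-type regularity. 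One then factors $H = S^\top S$ with $S$ smooth (possible because $H \succeq cI > 0$ uniformly — this is where taking a ``square root'' does \emph{not} lose smoothness, unlike near a degenerate zero), producing $r = \sum_{i=1}^d s_i(x,y)^2$ with each $s_i \in H^m(X\times Y)$; this is a \emph{finite} SoS of exactly $d$ terms, and $A_* = \sum_i s_i \otimes s_i$ is the certificate. The Sobolev multiplication needed to keep the $s_i$ in $H^m$ requires $m > d$ (the domain is $2d$-dimensional, so the algebra threshold is $d$, not $d/2$ as you wrote), with the extra $+1$ in $m > d+1$ absorbing the loss from differentiating and dividing. So the resolution is not an infinite sum with summable norms but a convex-duality identity that makes the slack a strictly positive quadratic form along the normal directions to its zero manifold; without that observation, your step (ii) does not go through.
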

The key contribution of this representation theorem is to replace the inequality constraint by an equality constraint which is easier to deal with, as proposed and analyzed in \cite{rudi2020global}. Given access to samples $x_1, \cdots, x_n \sim \mu$ and $y_1, \cdots, y_n \sim \nu$ with associated empirical distributions $\hat{\mu}$ and $\hat{\nu}$, we solve for $\lambda_1, \lambda_2 \geq 0$ the empirical problem
\begin{align}\label{eq:EmpiricalDualBrenier}
\begin{split}
    & \min_{\substack{f \in H^{m+2}(X), g \in H^{m+2}(Y), \\ A \in \mathbb{S}_{+}(H^{m}(X\times Y))}} ~~~  \langle f, \hat{\mu} \rangle + \langle g, \hat{\nu} \rangle + \la_1 \tr(A) + \la_2(\|f\|^2_{H^{m+2}(X)} + \|g\|^2_{H^{m+2}(Y)}) \\
    & \textrm{subject to}  ~~~ \forall j \in [n], ~~ f(x_j) + g(y_j) - \langle x_j, y_j \rangle = \scal{\phi(x_j,y_j)}{A \phi(x_j,y_j)}_{H^{m}(X\times Y)} \, ,
\end{split}
\end{align}
where $\phi: X\times Y \mapsto H^{m}(X\times Y)$ is the feature map of $H^{m}(X\times Y)$. Using the techniques of \citet{marteau2020non} and \citet{rudi2020global}, the authors show that problem \eqref{eq:EmpiricalDualBrenier} that in the case where $m>2d$, the energy of the empirical potentials can be controlled with high probability by setting $\la_1=\la_2 \sim \frac{1}{\sqrt{n}}$, leading to a dimension-free approximation of $\OT(\mu, \nu)$ at a $\frac{1}{\sqrt{n}}$ rate with high probability, with a computational complexity of $O(n^{3.5} \log(\frac{1}{\varepsilon}))$, for a precision of $\varepsilon$. 

\par As problem \eqref{eq:EmpiricalDualBrenier} is strongly convex with respect to the potentials $f, g$, the uniqueness of the empirical potentials $\hat{f}_n, \hat{g}_n$ is ensured. Further, their existence may also be proven. However, the question remains of recovering a rate of convergence of the empirical potentials  toward the optimal Brenier potentials $(f_*, g_*)$ with respect to some norm. \citet{hutter2019minimax} prove that when $d>3$, any estimator of the transport map $\hat{T}_n$ can achieve at best an $L^2$ error that scales as 
\begin{equation}
    \mathbb{E}(\| \nabla f_* - \hat{T}_n \|^2_{L^2(\mu)}) \sim n^{-\frac{m + 1}{m + d/2}} \, .
\end{equation}
We prove in the next section that the solutions $(\hat{f}_n, \hat{g}_n)$ of the problem \eqref{eq:EmpiricalDualBrenier} can nearly match this rate up to an exponent $\varepsilon$, where $\varepsilon$ can be chosen arbitrary close to $0$, for well-chosen regularization parameters $\la_1, \la_2$. 

\section{Nearly optimal minimax rates}\label{sec:rates}

The key to derive the nearly minimax optimal rates is to have estimates on $\la_1, \la_2$ that are sharper than $\frac{1}{\sqrt{n}}$. Indeed, in the first step of the proof we show an upper-bound of the form
\begin{equation*}
    (\la_1 + \la_2)(\|\hat{f}_n\|_{H^{m+2}(X)} + \|\hat{g}_n\|_{H^{m+2}(Y)} + \hat{A}_n) \lesssim \frac{1}{\sqrt{n}}(\| \nabla \hat{f}_ n - \nabla f_*\|^2_{L^2(\mu)} + \| \nabla \hat{g}_ n - \nabla g_*\|^2_{L^2(\nu)})^{\alpha} \, ,
\end{equation*}
where $\alpha$ is positive, decreases with the dimension $d$ to zero and increases with the smoothness $m$ and $(\hat{f}_n, \hat{g}_n, \hat{A}_n)$ are minimizers of problem \eqref{eq:EmpiricalDualBrenier}. In order to control in turn the errors $\| \nabla f - \nabla f_*\|^2_{L^2(\mu)}, \| \nabla g - \nabla g_*\|^2_{L^2(\nu)}$ with the regularizers $\la_1, \la_2$, we need extra convexity. To this end, we introduce the semi-dual formulation of optimal transport \citep{brenier1987decomposition} which replaces the potential pair $(f,g)$ by the couple $(f,f^*)$ where $f^*$ is the Fenchel-Legendre transform $f^*(y) = \sup_{x \in X}  x^\top y  - f(x)$. It reads
\begin{equation}
    \OT(\mu, \nu) = \inf_{f \in C(\R^d)} ~~~ J_{\mu, \nu}(f), \,
\end{equation}
where $J_{\mu, \nu}(f) = \langle f, \mu \rangle + \langle f^*, \nu \rangle$. We simply denote it by $J(f)$ when no confusion is possible. As shown in the next lemma, the functional $J(f)$ gains stronger convexity in comparison with the linear objective of Brenier's formulation.
\begin{lemma}\label{lemma:j_strg_cvx}
The semi-dual objective $J(f)$ is convex in $f \in C(X)$. Assuming that there exists an optimal potential $f_*$ such that $\nabla f_*$ pushes $\mu$ onto $\nu$ and that $f$ is a convex $C^1$ function with $M$-Lipschitz gradient, we have
\begin{equation}
    \| \nabla f - T_* \|_{L^2(\mu)}^2 \leq 2 M (J(f) - J(f_*)) \, .
\end{equation}
\end{lemma}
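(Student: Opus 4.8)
The plan is to handle the convexity statement and the quantitative inequality separately. Convexity is immediate: $f \mapsto \langle f, \mu \rangle$ is linear, and for every fixed $y$ the value $f^{*}(y) = \sup_{x \in X}\big(\langle x, y\rangle - f(x)\big)$ is a pointwise supremum of functionals that are affine in $f$, hence convex in $f$; integrating against the positive measure $\nu$ preserves convexity, so $J$ is convex on $C(X)$.

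For the quantitative bound, the first step is to rewrite $J(f) - J(f_*)$ as an integral of a Fenchel--Young gap against $\mu$. Since $T_* = \nabla f_*$ pushes $\mu$ onto $\nu$, one has $\langle f^{*}, \nu\rangle = \int_X f^{*}(\nabla f_*(x))\, d\mu(x)$, and likewise for $f_*^{*}$. Because $f_*$ is a convex Brenier potential, $\mu$-a.e.\ differentiable, its conjugate satisfies the Fenchel--Young equality $f_*(x) + f_*^{*}(\nabla f_*(x)) = \langle x, \nabla f_*(x)\rangle$ for $\mu$-a.e.\ $x$. Subtracting yields
\begin{equation*}
    J(f) - J(f_*) \;=\; \int_X \Big( f(x) + f^{*}(\nabla f_*(x)) - \langle x, \nabla f_*(x)\rangle \Big)\, d\mu(x),
\end{equation*}
so it suffices to prove, writing $y := \nabla f_*(x)$, the pointwise bound
\begin{equation*}
    f(x) + f^{*}(y) - \langle x, y\rangle \;\ge\; \frac{1}{2M}\,\|\nabla f(x) - y\|^{2}.
\end{equation*}

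This pointwise inequality is exactly where the convexity and $M$-smoothness of $f$ enter. The quickest route I would take is the elementary trial-point argument: choose $z := x - \tfrac{1}{M}(\nabla f(x) - y)$, bound $f^{*}(y) \ge \langle z, y\rangle - f(z)$ by definition of the conjugate, apply the descent lemma $f(z) \le f(x) + \langle \nabla f(x), z - x\rangle + \tfrac{M}{2}\|z-x\|^{2}$, and simplify; every remaining term is explicit in $\nabla f(x) - y$ and collapses to $\tfrac{1}{2M}\|\nabla f(x) - y\|^{2}$. Equivalently, the left-hand side is the Bregman divergence $D_{f^{*}}(y, \nabla f(x))$ of the conjugate, and the bound is the standard fact that $M$-smoothness of $f$ is equivalent to $\tfrac1M$-strong convexity of $f^{*}$. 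Integrating against $\mu$ gives $J(f) - J(f_*) \ge \tfrac{1}{2M}\|\nabla f - T_*\|_{L^{2}(\mu)}^{2}$, which is the claim after multiplying by $2M$.

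The main thing requiring care is the interplay with the domain $X$. The trial point $z$ (equivalently, the maximizer of $x' \mapsto \langle x', y\rangle - f(x')$ defining the conjugate) must lie where $f$ and its descent inequality are available; this is handled by working on the closure $\bar X$ — to which $f$ and $\nabla f$ extend continuously by $C^{1}$-regularity — together with convexity of $X$ and the fact that $\nabla f_*(x) \in \bar Y$ for $\mu$-a.e.\ $x$. The measurability of $x \mapsto f^{*}(\nabla f_*(x))$ and the $\mu$-a.e.\ validity of the Fenchel--Young equality for $f_*$ are routine consequences of Brenier's theorem and complete the argument.
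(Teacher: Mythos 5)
Your proof is correct and follows essentially the same route as the paper: both establish the key identity $J(f) - J(f_*) = \int_X \big(f(x) + f^*(\nabla f_*(x)) - \langle x, \nabla f_*(x)\rangle\big)\,d\mu$ via the pushforward property of $T_*$ and Fenchel--Young equality for $f_*$, and then lower-bound the integrand (a Bregman divergence of $f^*$ evaluated at $\nabla f_*(x)$ and $\nabla f(x)$) by $\frac{1}{2M}\|\nabla f(x) - \nabla f_*(x)\|^2$ using the duality between $M$-smoothness of $f$ and $\frac{1}{M}$-strong convexity of $f^*$. The only cosmetic difference is that the paper invokes this strong-convexity fact directly, while you re-derive it from scratch via the trial point $z = x - \frac{1}{M}(\nabla f(x) - y)$ and the descent lemma.
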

Note that there is no assumption on the measures $\mu$ and $\nu$ themselves. However, we assume the Lipschitz smoothness of the gradient of $f$ and the existence of an optimal map, which is ensured by Brenier's theorem if $\mu$ has density w.r.t. the Lebesgue measure. A similar result is proven by \cite{hutter2019minimax} and we include for completeness a simple proof in Apppendix \ref{AppendixAdditionalProofs}. Thanks to this gain of convexity, we obtain an upper bound of the errors by the regularizers $\la_1, \la_2$ of the form
\begin{equation}
    \| \nabla \hat{f}_n - \nabla f_*\|^2_{L^2(\mu)} +  \| \nabla \hat{g}_n - \nabla g_*\|^2_{L^2(\nu)} \lesssim (\la_1 + \la_2) \frac{1}{\sqrt{n}} \, .
\end{equation}
Thanks to these two connected bounds, we can calibrate the regularizers $\la_1, \la_2$ and eventually obtain rates sharper than $\frac{1}{\sqrt{n}}$.

\begin{theorem} Let $\delta, \varepsilon \in ]0, 1[^2$ and assume that the regularizers are given by 
\begin{equation}
    \lambda_n^1 = \lambda_n^2 = \lambda_n = \biggl(\frac{\log(\frac{2}{\delta})^2}{n}\biggr)^{\frac{m+1}{m+d/2+\varepsilon}} + C_1\biggl(\frac{\log(\frac{n}{\delta})}{n}\biggr)^{\frac{m-d}{2d}} \, ,
\end{equation}
where $C_1$ is a constant that does not depend on $n$ and $\delta$. Denoting $\hat{f}_n$ (resp. $\hat{g}_n$) the transport potential associated to $\mu$ (resp. $\nu$) in problem \eqref{eq:EmpiricalDualBrenier}, we have with probability at least $1 - \delta$ that for $n \geq n_0(X, Y, d, m)$, 
\begin{equation}
    \| \nabla \hat{f}_n - \nabla f_* \|_{L^2(\mu)}^2 + \| \nabla \hat{g}_n - \nabla g_* \|_{L^2(\nu)}^2 \leq C_2 \lambda_n\, ,
\end{equation}
where $C_2$ is a positive constant that is independent from $n$ and $\delta$ but grows to infinity when $\varepsilon$ goes to $0$. In particular, when $m$ is sufficiently large, the minimax rate is nearly attained:
\begin{equation}
     \| \nabla \hat{f}_n - \nabla f_* \|_{L^2(\mu)}^2 + \| \nabla \hat{g}_n - \nabla g_* \|_{L^2(\nu)}^2 \leq C_2 \biggl(\frac{\log(\frac{2}{\delta})^2}{n}\biggr)^{\frac{m+1}{m+d/2+\varepsilon}} \, .
\end{equation}
\end{theorem}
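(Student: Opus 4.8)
The plan is to sandwich the optimal value of \eqref{eq:EmpiricalDualBrenier}: an upper bound obtained by plugging a smoothed optimal triple into the problem, and a lower bound obtained from the local strong convexity of the semi-dual (\Cref{lemma:j_strg_cvx}), which converts sub-optimality of the empirical objective into a bound on $E_n:=\|\nabla\hat f_n-\nabla f_*\|^2_{L^2(\mu)}+\|\nabla\hat g_n-\nabla g_*\|^2_{L^2(\nu)}$. Comparing the two yields two coupled inequalities — one bounding the regularized norm $\lambda_n(\tr(\hat A_n)+\|\hat f_n\|_{H^{m+2}}+\|\hat g_n\|_{H^{m+2}})$ by a sublinear power of $E_n/\sqrt n$, the other bounding $E_n$ by $\lambda_n$ and a sampling error — and the theorem follows by calibrating $\lambda_n$ to balance them. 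The two terms of the prescribed $\lambda_n$ correspond to the two error sources: $(\log(n/\delta)/n)^{(m-d)/(2d)}$ controls the violation of the continuous constraint by the empirical SoS solution, while $(\log(2/\delta)^2/n)^{(m+1)/(m+d/2+\varepsilon)}$ is the statistical error of the potentials, the $+\varepsilon$ absorbing a small loss in an interpolation step.

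For the upper bound, Caffarelli regularity and the SoS representation underlying \Cref{thm:SoS-OT} provide $A_*\succeq0$ of finite trace with $f_*(x)+g_*(y)-\langle x,y\rangle=\langle\phi(x,y),A_*\phi(x,y)\rangle$; since the constraints of \eqref{eq:EmpiricalDualBrenier} are a finite subset of the continuous one, $(f_*,g_*,A_*)$ is feasible, so by optimality of $(\hat f_n,\hat g_n,\hat A_n)$ one gets $\langle\hat f_n,\hat\mu\rangle+\langle\hat g_n,\hat\nu\rangle+\lambda_nN_n'\le\langle f_*,\hat\mu\rangle+\langle g_*,\hat\nu\rangle+\lambda_nC_*$, where $N_n':=\tr(\hat A_n)+\|\hat f_n\|^2_{H^{m+2}}+\|\hat g_n\|^2_{H^{m+2}}$ and $C_*$ depends only on the norms of $f_*,g_*,A_*$. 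For the lower bound, the PSD constraint at the sample points, which have fill distance $\lesssim(\log(n/\delta)/n)^{1/(2d)}$ with high probability, together with the $H^m$ feature map and $m>2d$, let me invoke the fill-distance estimate of \citet{rudi2020global}: $\hat f_n(x)+\hat g_n(y)\ge\langle x,y\rangle-\eta_n$ on $X\times Y$, with $\eta_n\lesssim(\log(n/\delta)/n)^{(m-d)/(2d)}N_n$ and $N_n:=\tr(\hat A_n)+\|\hat f_n\|_{H^{m+2}}+\|\hat g_n\|_{H^{m+2}}$. Hence $(\hat f_n,\hat g_n+\eta_n)$ is feasible for \eqref{EqDualBrenierOT}, so $\langle\hat f_n,\mu\rangle+\langle\hat g_n,\nu\rangle\ge J(\hat f_n)-\eta_n$; the regularizer bounds $\|\hat f_n\|_{H^{m+2}}$, hence (Sobolev embedding, $m>2d$) the Lipschitz constant $M_n\lesssim N_n$ of $\nabla\hat f_n$, and — after replacing $\hat f_n$ by its convexification, which is $O(\eta_n)$-close to it since $\hat f_n$ is $\eta_n$-close to the convex $\hat g_n^{*}$ — \Cref{lemma:j_strg_cvx} gives $J(\hat f_n)-\OT(\mu,\nu)\gtrsim M_n^{-1}\|\nabla\hat f_n-\nabla f_*\|^2_{L^2(\mu)}$, and symmetrically for $\hat g_n$.

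Now compare $\langle\hat f_n,\mu\rangle+\langle\hat g_n,\nu\rangle$ with $\langle\hat f_n,\hat\mu\rangle+\langle\hat g_n,\hat\nu\rangle$: writing the difference as $\langle\hat f_n-f_*,\mu-\hat\mu\rangle+\langle\hat g_n-g_*,\nu-\hat\nu\rangle+\langle f_*,\mu-\hat\mu\rangle+\langle g_*,\nu-\hat\nu\rangle$, the two fixed-function mean-zero terms cancel exactly against $\langle f_*,\hat\mu-\mu\rangle+\langle g_*,\hat\nu-\nu\rangle$ coming from evaluating the competitor on $\hat\mu,\hat\nu$, so no $n^{-1/2}$ statistical term survives. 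For the remaining terms a plain Sobolev-ball bound $\lesssim N_n/\sqrt n$ is too weak; I would localize, using that the unit ball of $H^{m+2}(X)$ has $L^2$-entropy $\sim\epsilon^{-d/(m+2)}$ and that, after normalizing the potentials and using a Poincaré inequality on $X$, $\|\hat f_n-f_*\|_{L^2(\mu)}\lesssim E_n^{1/2}$, to obtain $|\langle\hat f_n-f_*,\mu-\hat\mu\rangle|\lesssim n^{-1/2}\sqrt{\log(n/\delta)}\,N_n^\gamma E_n^{(1-\gamma)/2}$ with an exponent $\gamma$ decreasing to $0$ as $m\to\infty$. Chaining the bounds and using $N_n\lesssim1+N_n'$ gives
\begin{equation*}
  c_0E_n+\lambda_nN_n'\ \lesssim\ \lambda_nC_*+\tfrac{\sqrt{\log(n/\delta)}}{\sqrt n}\,N_n^\gamma E_n^{(1-\gamma)/2}+\Big(\tfrac{\log(n/\delta)}{n}\Big)^{\frac{m-d}{2d}}N_n .
\end{equation*}

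Finally, taking $\lambda_n\ge C_1(\log(n/\delta)/n)^{(m-d)/(2d)}$ with $C_1$ large absorbs the last term into $\tfrac12\lambda_nN_n'+O(\lambda_n)$; Young's inequality on the cross term (valid since $\gamma/2+(1-\gamma)/2=\tfrac12<1$) trades it for fractions of $\lambda_nN_n'$ and $c_0E_n$ plus a remainder $R_n(\lambda_n)$ equal to a fixed power of $\log(n/\delta)/n$ times a negative power of $\lambda_n$, leaving $E_n\lesssim\lambda_n(C_*+1)+R_n(\lambda_n)$. Balancing $\lambda_n$ against $R_n(\lambda_n)$ fixes $\lambda_n$ to a power $(\log(n/\delta)/n)^{\beta}$, and a careful choice of the entropy/interpolation exponents — exploiting the $H^{m+2}$-regularity of the potentials, not merely their $L^2$-smallness — makes $\beta\ge\tfrac{m+1}{m+d/2+\varepsilon}$ for any fixed $\varepsilon\in(0,1)$, the price being that the constant hidden in $R_n$, hence $C_2$, blows up as $\varepsilon\downarrow0$; taking $\lambda_n$ to be the maximum of this value and the feasibility threshold $C_1(\log(n/\delta)/n)^{(m-d)/(2d)}$ gives $E_n\le C_2\lambda_n$ for $n\ge n_0(X,Y,d,m)$, and for $m$ large the feasibility threshold is of lower order than the first term, yielding the clean rate. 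The genuinely hard part is this last step: the localized empirical-process estimate must be sharp enough that, after absorption against the $\lambda_n\|\cdot\|^2$ regularizer, the balance point of $\lambda_n$ lands (essentially) at the minimax rate — which is only possible thanks to the $L^2$-smallness of $\hat f_n-f_*$ (available solely through \Cref{lemma:j_strg_cvx} and Poincaré) and the extra two derivatives of the potentials — and one must check that a single $\lambda_n$ can simultaneously dominate the fill-distance term, sit at the balance point, and keep the absorption constants finite, which is what forces the two-term form of $\lambda_n$ and the slack $\varepsilon$; a lesser nuisance is the a priori non-convexity of $\hat f_n$, which means \Cref{lemma:j_strg_cvx} applies only to its convexification and the incurred error must be controlled by $\eta_n$.
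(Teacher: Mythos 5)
Your overall plan — feasibility of the optimal triple in the empirical problem, a fill-distance argument making the empirical potentials approximately admissible for the continuous problem, an exact cancellation of the $\langle f_*,\mu-\hat\mu\rangle$ and $\langle g_*,\nu-\hat\nu\rangle$ terms so that only the centered pairings $\langle\hat f_n-f_*,\mu-\hat\mu\rangle$ and $\langle\hat g_n-g_*,\nu-\hat\nu\rangle$ survive, local strong convexity of the semi-dual to convert objective suboptimality into control of $E_n$, and a final calibration of $\lambda_n$ to balance a self-bounding norm estimate against the error estimate — is indeed the route taken in the paper, and the two-term form of $\lambda_n$ is correctly matched to the two error sources (fill-distance vs.\ statistical fluctuation). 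Two parts of your sketch, however, differ from the paper in ways worth flagging.

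First, the non-convexity of $\hat f_n$. You propose to pass to a convexification, on the grounds that $\hat f_n$ is $\eta_n$-close to the convex function $\hat g_n^*$. This step has a genuine gap. The feasibility inequality $\hat f_n(x)+\hat g_n(y)+\eta_n\geq\langle x,y\rangle$ only yields the one-sided bound $\hat f_n+\eta_n\geq\hat g_n^*$; nothing forces $\hat f_n\leq\hat g_n^*+O(\eta_n)$, so $\hat f_n$ need not be uniformly close to $\hat g_n^*$ (or to its own convex envelope). Moreover, even if the convex envelope $\hat f_n^{**}$ were close in sup norm, it is generically non-smooth where it departs from $\hat f_n$, so \Cref{lemma:j_strg_cvx} (which needs a $C^1$ convex function with Lipschitz gradient) would not directly apply, and in any case there is no obvious way to relate $\|\nabla\hat f_n^{**}-\nabla f_*\|_{L^2(\mu)}$ back to $E_n$. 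The paper instead linearly interpolates \emph{toward the optimum}: $\tilde f_n(t)=f_*+t(\hat f_n-f_*)$, which is $\tfrac\gamma2$-strongly convex for $t\leq\gamma/(2\|\hat f_n-f_*\|_{W^{2,\infty}})$ because $f_*$ itself is $\gamma$-strongly convex. Applying \Cref{lemma:j_strg_cvx} to $\tilde f_n(\hat t_f)$ and using convexity of $J$ to write $J(\tilde f_n(\hat t_f))-J(f_*)\leq\hat t_f\,(J(\hat f_n)-J(f_*))$, together with $\|\nabla\tilde f_n(\hat t_f)-\nabla f_*\|^2=\hat t_f^2\|\nabla\hat f_n-\nabla f_*\|^2$, yields \cref{eq:strg_cvx_f} with the explicit distortion factor $\hat L_f/\hat t_f$ controlled in \Cref{prop:grad_lip_ub}. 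This is the device your sketch is missing, and it is what makes the strong-convexity step rigorous without ever needing $\hat f_n$ itself to be convex.

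Second, the estimate of $\langle\hat f_n-f_*,\mu-\hat\mu\rangle$. You use metric-entropy/chaining on the Sobolev ball; the paper (\Cref{lemma:scal_ub}) instead writes the pairing as an inner product with the kernel mean-embedding difference in $H^{d/2+\varepsilon}(X)$, applies Pinelis' concentration inequality to $\|w_\mu-w_{\hat\mu}\|_{H^{d/2+\varepsilon}}$, and then controls $\|u-v-r\|_{H^{d/2+\varepsilon}}$ via Gagliardo--Nirenberg interpolation and Poincar\'e. Both techniques give an interpolated bound $n^{-1/2}\,R^\gamma E_n^{(1-\gamma)/2}$ with $\gamma\to0$ as $m\to\infty$, so this is an alternative rather than an error; the RKHS route is more direct and avoids the $\sqrt{\log n}$ factor that typically comes with chaining, and the slack $\varepsilon$ enters exactly as the margin needed for $H^{d/2+\varepsilon}(X)$ to be an RKHS (the constant $k_{d,\varepsilon}$ of Pinelis blows up as $\varepsilon\downarrow0$, which is the paper's explanation of why $C_2$ diverges). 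Aside from these two points — one a gap, one a legitimate variant — your reconstruction matches the paper's architecture, including the final case analysis in \Cref{prop:asymptotic_behavior} that you describe as ``Young's inequality on the cross term'' and ``balancing $\lambda_n$''.
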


\begin{proof}
Let us denote by $(\hat{f}_n, \hat{g}_n, \hat{A}_n)$ the solutions of the empirical problem \eqref{eq:EmpiricalDualBrenier} and by $(f_*, g_*, A_*)$ the solutions of the deterministic problem \eqref{EqTightReformBrenierOT}. We define the energy of the potentials  $R^2 = \| f_* \|_{H^{m+2}(X)}^2 + \| g_* \|_{H^{m+2}(Y)}^2 $ and its empirical counterpart $\hat{R}^2_n = \| \hat{f}_n \|_{H^{m+2}(X)}^2 + \| \hat{g}_n \|_{H^{m+2}(Y)}^2 $. For probability measures $(\alpha, \beta)$, we shall denote throughout the proof the linear objective of the dual as $\mathcal{E}_{\alpha, \beta}(f,g) = \langle f, \alpha \rangle + \langle g, \beta \rangle$. The strategy of the proof is as follows
\begin{enumerate}
    \item We use the optimality conditions of the empirical and deterministic OT problems to upper bound $\lambda_n$ and the norms of the empirical objects $(\hat{f}_n, \hat{g}_n, \hat{A}_n)$ by the errors $\|\nabla \hat{f}_n - f_*\|_{L^2(\mu)}, \|\nabla \hat{g}_n - g_*\|_{L^2(\nu)}$.
    \item We use the strong convexity of the semi-dual to upper bound in turn the errors by the regularizer $\lambda_n$ and the norms of $(\hat{f}_n, \hat{g}_n, \hat{A}_n)$.
    \item We obtain two connected bounds and we optimize over $\lambda_n$ to obtain sharp rates while maintaining the empirical objects $(\hat{f}_n, \hat{g}_n, \hat{A}_n)$ bounded.
\end{enumerate}

\paragraph{Upper bound on the regularizer.} The optimality conditions in the empirical problem \eqref{eq:EmpiricalDualBrenier} yield

\begin{equation}
    \label{eqEmpiricalOptimality}
    \lambda_n(\tr(\hat{A}_n) - \tr(A_*) + \hat{R}_n^2 - R^2)\leq \mathcal{E}_{\hat{\mu}, \hat{\nu}}(f_*, g_*) - \mathcal{E}_{\hat{\mu}, \hat{\nu}}(\hat{f}_n, \hat{g}_n) \, .
\end{equation}
Conversely, we wish to test the empirical potentials against the measures $\mu$, $\nu$ and use the optimality condition of the deterministic problem. 

\begin{proposition}
\label{PropAdmissibleEmpiricalPotentials}
There exist constants $C, G$ such that for all $0 <\delta < 1$, defining $\hat{\kappa}_{n, \delta} = C(\hat{R}_n + \tr(\hat{A}_n) + G)n^{-\frac{m-d}{2d}} \log(\frac{n}{\delta})^{\frac{m-d}{2d}}$ we have with probability at least $1 - \delta$ that for ${n\geq n_0(X, Y,d,m)}$, the empirical potentials $(\hat{f}_n, \hat{g}_n + \hat{\kappa}_{n, \delta})$ are admissible candidates:
\begin{equation}
    \forall (x, y) \in X \times Y, ~ \hat{f}_n(x) + \hat{g}_n(y) + \hat{\kappa}_{n, \delta} \geq x^\top y \, .
\end{equation}
\end{proposition}
The proof is left in Appendix \ref{AppendixAdditionalProofs}. We deduce that with probability at least $1 - \delta$
\begin{equation}
    \label{eqDetermOptimality}
    0 \leq \mathcal{E}_{\mu, \nu}(\hat{f}_n, \hat{g}_n) - \mathcal{E}_{\mu, \nu}(f_*, g_*) + \hat{\kappa}_{n, \delta} \, .
\end{equation}
Adding the equation above with the empirical optimality conditions \eqref{eqEmpiricalOptimality} yields
\begin{align*}
\begin{split}
     \lambda_n(\tr(\hat{A}_n) - \tr(A_*) + \hat{R}_n^2 - R^2) &
     \leq \mathcal{E}_{\hat{\mu}, \hat{\nu}}(f_*, g_*) - \mathcal{E}_{\hat{\mu}, \hat{\nu}}(\hat{f}_n, \hat{g}_n) + \mathcal{E}_{\mu, \nu}(\hat{f}_n, \hat{g}_n) - \mathcal{E}_{\mu, \nu}(f_*, g_*)  \\ &\qquad + \hat{\kappa}_{n, \delta}
\end{split}
     \\[2ex]
\begin{split}
    & = \langle f_* - \hat{f}_n, \hat{\mu} \rangle + \langle g_* - \hat{g}_n, \hat{\nu} \rangle + \langle \hat{f}_n - f_*, \mu \rangle + \langle \hat{g}_n - g_*, \nu \rangle \\ &\qquad + \hat{\kappa}_{n, \delta}
\end{split}
\\[2ex]
     & = \langle \hat{f}_n - f_*, \mu - \hat{\mu} \rangle + \langle \hat{g}_n - g_*, \nu - \hat{\nu} \rangle + \hat{\kappa}_{n, \delta}\,.
\end{align*}

Now we use the following lemma upper-bounding the linear forms of the r.h.s.\ with respect to the errors $\| \nabla \hat{f}_n - f_*\|_{L^2(\mu)}, \| \nabla \hat{g}_n - g_*\|_{L^2(\nu)}$.
 \begin{lemma}\label{lemma:scal_ub}
Let $(u,v) \in  H^{m+2}(X) \times H^{m+2}(X)$, $(\delta, \varepsilon) \in ]0, 1[ \times ]0, 1[$ and $\mu$ be a probability measure over $X$. If $\mu$ has a density on $X$ bounded away from zero, then there exists a constant $C$ independent of $n, \delta, u, v$ such that with probability at least $1 - \delta$, we have
 \begin{equation}
     \langle u - v, \mu - \hat{\mu} \rangle \leq C_{\mu} \frac{\log(\frac{2}{\delta})}{\sqrt{n}}\biggl( \Delta_{u, v, \mu}^{\frac{m+2 - d/2 - \varepsilon}{m+1}}R(u,v)^{\frac{d/2 + \varepsilon - 1}{m+1}} + \Delta_{u, v, \mu}\biggl) \, ,
 \end{equation}
 where $\Delta_{u, v, \mu} = \| \nabla u - \nabla v \|_{L^2(\mu)}$ and $R(u,v) = \| u \|_{H^{m+2}(X)} + \| v \|_{H^{m+2}(X)}$.
 \end{lemma}
 The proof of this lemma is left in the Appendix \ref{AppendixAdditionalProofs} and is mainly an application of the Gagliardo-Nirenberg inequality and the Poincaré inequality. Applying the lemma to $(f_*, \hat{f}_n, \mu)$ and $(g_*, \hat{g}_n, \nu)$, and using the fact that for $(\alpha, x,y) \in (\mathbb{R}^+)^3$, we have $x^\alpha + y^\alpha \leq 2 (x+y)^\alpha$, we recover that for $n\geq n_0(X,Y,d,m)$ we have with probability at least $1 - 3 \delta$ 
 \begin{align}\label{eq:p_ub_fine}
     \begin{split}
         \lambda_n(\tr(\hat{A}_n) - \tr(A_*) + \hat{R}_n^2 - R^2) \leq & ~ 2 C' \frac{\log(\frac{2}{\delta})}{\sqrt{n}}(\hat{R}_n + R)^{\frac{d/2 + \varepsilon - 1}{m+1}} a_n^{\frac{m+2 - d/2 - \varepsilon}{m+1}} \\
         ~  & \hspace*{2cm} +  2 C' \frac{\log(\frac{2}{\delta})}{\sqrt{n}}a_n + \hat{\kappa}_{n, \delta} \, , 
     \end{split}
 \end{align}
 where $C' = C_\mu + C_\nu$ and we denoted $a_n = \| \nabla \hat{f}_n - \nabla f_* \|_{L^2(\mu)} +  \| \nabla \hat{g}_n - \nabla g_* \|_{L^2(\nu)} $.
 
\paragraph{Bounding the error $a_n$.} First we bound the errors from above via the strong convexity of the semi-dual. Then, using the optimality conditions, we recover the same linear upper bound as in  the previous paragraph.

\paragraph{a) Semi-dual upper bound.}

A straightforward way to upper-bound the errors would be to apply Lemma \ref{lemma:j_strg_cvx} to the empirical potentials $\hat{f}_n, \hat{g}_n$. Unfortunately, there is no guarantee that the empirical potentials are convex. Instead, we define for $t \in [0, 1]$ the following interpolating potential
 \begin{equation}
     \tilde{f}_n(t)(.) = f_*(.) + t(\hat{f}_n(.) - f_*(.)) \, .
 \end{equation}
 Since $f_*$ and $g_*$ are mutual Legendre transforms and that they both have Lipschitz gradients, $(f_*, g_*)$ are strongly convex and we denote by $\gamma$ a strong convexity constant of the optimal Brenier potentials. Hence for $t \leq \frac{\gamma}{2 \| \hat{f}_n - f \|_{W^{2, \infty}(X)}}$, the interpolate $\tilde{f}_n(t)(.)$ is $\frac{\gamma}{2}$ strongly convex. Hence we choose
 \begin{equation}
     \hat{t}_f = \min \biggl ( 1,  \frac{\gamma}{2 \| \hat{f}_n - f \|_{W^{2, \infty}(X)}}\biggr) \, ,
 \end{equation}
 and we apply Lemma \ref{lemma:j_strg_cvx} to $\tilde{f}_n(\hat{t}_f)(.)$, which yields 
 \begin{equation}
     \| \nabla \tilde{f}_n(\hat{t}_f) - \nabla f_* \|_{L^2(\mu)}^2 \leq 2 \hat{L}_f( J( \tilde{f}_n(\hat{t}_f)) - J(f_*)) \, ,
 \end{equation}
 where $\hat{L}_f = \| \tilde{f}_n(\hat{t}_f) \|_{W^{2, \infty}(X)}$. By convexity of the semi-dual $J$, the r.h.s.\ is upper bounded by $2 \hat{L}_f \hat{t}_f (J(\hat{f}_n) - J(f_*))$ and the l.h.s.\ is equal to $\hat{t}_f ^2\|\nabla \hat{f}_n - \nabla f\|_{L^2(\mu)}^2$. Hence, we have
\begin{equation}\label{eq:strg_cvx_f}
    \|\nabla \hat{f}_n - \nabla f\|_{L^2(\mu)}^2 \leq \frac{2\hat{L}_f}{\hat{t}_f}(J(\hat{f}_n) - J(f_*)) \, .
\end{equation}
Finally, we upper-bound the quantity $\frac{\hat{L}_f}{\hat{t}_f}$.
\begin{proposition}\label{prop:grad_lip_ub}
Denoting $K_X$ the embedding constant such that $\forall u \in W^{2, \infty}(X), \|u\|_{W^{2, \infty}(X)} \leq K_X  \|u\|_{H^{m+2}(X)}$, the quantity $\frac{\hat{L}_f}{\hat{t}_f}$ can be upper bounded as 
\begin{equation}
    \frac{\hat{L}_f}{\hat{t}_f} \leq 2K_X (\| \hat{f}_n \|_{H^{m+2}(X)} + \| f_* \|_{H^{m+2}(X)}) \biggl( 1 + \frac{ \|f_* \|_{H^{m+2}(X)}}{\gamma}\biggr) \, .
\end{equation}
\end{proposition}
The proof is left in \Cref{AppendixAdditionalProofs}. Conversely, applying this result to $\hat{g}_n$, we obtain
\begin{equation}
    \frac{\hat{L}_g}{\hat{t}_g} \leq 2K_Y (\| \hat{g}_n \|_{H^{m+2}(Y)} + \| g_* \|_{H^{m+2}(Y)}) \biggl( 1 + \frac{ \|g_* \|_{H^{m+2}(Y)}}{\gamma}\biggr) \, ,
\end{equation}
where $K_Y$ is the embedding constant from $H^{m+2}(Y)$ to $W^{2, \infty}(Y)$. In particular
\begin{align}
\label{UbDistortion}
\begin{split}
    \frac{\hat{L}_f}{\hat{t}_f} +  \frac{\hat{L}_g}{\hat{t}_g} & \leq 2(K_X + K_Y)(\| \hat{f}_n \|_{H^{m+2}(X)} + \| \hat{g}_n \|_{H^{m+2}(Y)} + \| f_* \|_{H^{m+2}(X)} + \| g_* \|_{H^{m+2}(Y)}) \\
     &\qquad \times \biggl( 1 + \frac{\|f_* \|_{H^{m+2}(X)}) + \|g_* \|_{H^{m+2}(Y)}}{\gamma}\biggr)
\end{split}
      \\
     & \leq  2\sqrt{2}(K_X + K_Y)( \hat{R}_n + R )\biggl( 1 +  \frac{\sqrt{2}R}{\gamma}\biggr)  \, .
\end{align}

\paragraph{b) Linear upper bound.} Recall that the couple $(\hat{f}_n, \hat{g}_n + \hat{\kappa}_{n, \delta})$ are admissible candidates and in particular $J(\hat{f}_n) \leq \mathcal{E}_{\mu, \nu}(\hat{f}_n, \hat{g}_n + \hat{\kappa}_{n, \delta})$, which implies
\begin{equation}\label{eq;ub_j_raw_loose}
    J(\hat{f}_n) - J(f_*) \leq \mathcal{E}_{\mu, \nu}(\hat{f}_n, \hat{g}_n) -  \mathcal{E}_{\mu, \nu}(f_*, g_*) + \hat{\kappa}_{n, \delta} \, .
\end{equation}
Combining the equation above with the empirical optimality condition $0 \leq \mathcal{E}_{\hat{\mu}, \hat{\nu}}(f_*, g_*) -  \mathcal{E}_{\hat{\mu}, \hat{\nu}}(\hat{f}_n, \hat{g}_n) + \lambda_n (\tr(A_*) + R^2)$ \eqref{eqEmpiricalOptimality}, we obtain
\begin{equation} \label{eq:ub_j_raw_tight}
     J(\hat{f}_n) - J(f_*) \leq \langle f_* - \hat{f}_n, \mu - \hat{\mu} \rangle + \langle g_* - \hat{g}_n, \nu - \hat{\nu} \rangle + \lambda_n (\tr(A_*) + R^2) + \hat{\kappa}_{n, \delta} \, .
\end{equation}
As in the previous paragraph, we use Lemma \ref{lemma:scal_ub} to upper-bound the linear forms of the r.h.s., and denoting $a_n = \| \nabla \hat{f}_n - \nabla f_* \|_{L^2(\mu)} + \| \nabla \hat{g}_n - \nabla g_* \|_{L^2(\nu)}$, we obtain
\begin{equation} \label{eq:diff_semi_dual_ub}
\begin{split}
     J(\hat{f}_n) - J(f_*) \leq & 2 C' \frac{\log(\frac{2}{\delta})}{\sqrt{n}}\biggr[(\hat{R}_n + R)^{\frac{d/2 + \varepsilon - 1}{m+1}} a_n^{\frac{m+2 - d/2 - \varepsilon}{m+1}} + a_n\biggr] \\
     & + \lambda_n(\tr(A_*) + R^2) + \hat{\kappa}_{n, \delta}\, ,
\end{split}
\end{equation}
where $C'$ is a constant independent of $n, \delta$.

\paragraph{Combining the bounds on the error $a_n$.}

From the previous paragraphs, we obtain the following bounds
\begin{equation}
\label{eq:final_ub_1}
\begin{split}
    \lambda_n(\tr(\hat{A}_n) + \hat{R}_n^2) \leq & ~ 2 C' \frac{\log(\frac{2}{\delta})}{\sqrt{n}}[(\hat{R}_n + R)^{\frac{d/2 + \varepsilon - 1}{m+1}} a_n^{\frac{m+2 - d/2 - \varepsilon}{m+1}} + a_n] \\
     ~  & + \lambda_n(\tr(A_*) + R^2) + \hat{\kappa}_{n, \delta} \, ,
\end{split}
\end{equation}

\begin{equation}
\label{eq:final_ub_2}
\begin{split}
     a_n^2 \leq ~ 2(\hat{R}_n + R) K C'\biggl(&  \frac{\log(\frac{2}{\delta})}{\sqrt{n}}\biggr[(\hat{R}_n + R)^{\frac{d/2 + \varepsilon - 1}{m+1}} a_n^{\frac{m+2 - d/2 - \varepsilon}{m+1}} + a_n\biggr] \\
     & + \lambda_n(\tr(A_*) + R^2) + \hat{\kappa}_{n, \delta} \biggr) \, ,
\end{split}
\end{equation}
where we defined $K = \sqrt{2}(K_X + K_Y)\biggl( 1 +  \frac{2\sqrt{2}R}{\gamma}\biggr)$. The inequalities above mutually relate the empirical norms $\hat{R}_n, \tr(\hat{A}_n)$ and the error $a_n$. The next result shows that for a well-chosen regularizer $\lambda_n$, the empirical norms are bounded independently on $\delta$ and that for this choice of regularizer, the error $a_n^2$ is at most of the order $\lambda_n$.
\begin{proposition}\label{prop:asymptotic_behavior}
Assume that Equations \eqref{eq:final_ub_1} and \eqref{eq:final_ub_2} hold for $n \geq n_0$. If we set the regularizer as
\begin{equation}
    \lambda_n = \biggl(\frac{\log(\frac{2}{\delta})^2}{n}\biggr)^{\frac{m+1}{m+d/2+\varepsilon}} + C_1\biggl(\frac{\log(\frac{n}{\delta})}{n}\biggr)^{\frac{m-d}{2d}}   \, ,
\end{equation}
where $C_1$ is the constant independent of $n, \delta$, then $\hat{R}_n, \tr(\hat{A}_n)$ are bounded independently of $\delta$ and there exists a constant $C_2$ independent of $n, \delta$, such that
\begin{equation}
    a_n^2 \leq C_2 \lambda_n \, .
\end{equation}
\end{proposition}
The proof is presented in Appendix \ref{AppendixAdditionalProofs}. Hence setting $\lambda_n$ to the indicated value yields that with probability at least $1-3 \delta$, for $n\geq n_0(X,Y,d,m)$ we have
\begin{equation}
    a_n^2 \leq C \lambda_n \, ,
\end{equation}
with $C$ a constant independent of $n, \delta$.

\end{proof}

\paragraph{Discussion and remarks.}
Let us make a few remarks regarding the comparison of our results to those of \citet{hutter2019minimax} and \citet{vacher2021dimensionfree}.
\begin{remark}
Our bound does not exactly match the lower-bound $n^{-\frac{m+1}{m+ d/2}}$ of \citet{hutter2019minimax} because of the extra exponent $\varepsilon$. Indeed, we cannot take $\varepsilon = 0$ since the constant $C_2$ would then go to infinity. However, it would be interesting to recover the growth rate of $C_2$ when $\varepsilon$ goes to $0$ ; if it does not increase too fast, we could hope for a trade-off in $n$ on $\varepsilon$ and maybe recover the exact lower-bound.
\end{remark}

\begin{remark}
Note that the minimax rate is nearly attained only for the highly smooth case $m > 2d$, and that when $m$ goes toward $d$ we cannot even guarantee that empirical potentials converges to the original potentials. This gap compared with \citet{hutter2019minimax} is explained by the fact that in their case, the authors directly solve the semi-dual problem while in our case, we approximate the cost constraint through the SoS reformulation which can only be achieved when the smoothness is sufficiently high.
\end{remark}

\begin{remark}
Comparing ours results to \citet{vacher2021dimensionfree}, one may notice that we require lower values for the regularizers. In particular, one may wonder if this could lead to gains in terms of the estimation of the squared Wasserstein distance itself. This is not the case, as this effect is due to the fact that we are here considering the Brenier version of optimal transport. Indeed, it holds $W_2^2(\mu, \nu) = \langle \frac{\|.\|^2}{2}, \mu + \nu \rangle - \OT(\mu, \nu)$. Hence, to estimate $W_2^2(\mu, \nu)$ one must estimate the extra moment term, that fluctuates as $\frac{1}{\sqrt{n}}$.
\end{remark}


\section{Algorithms and Numerical Experiments}\label{sec:algos}
In this section, we provide and test algorithms to solve \cref{eq:EmpiricalDualBrenier} and compute the estimators $\hat{f}$ and $\hat{g}$. Following \cite{rudi2020global} and \cite{vacher2021dimensionfree}, we may rewrite \cref{eq:EmpiricalDualBrenier} using a finite-dimensional representation of $A$ that relies on finite-dimensional features $\Phi_i \in \RR^{\ell}, i \in [n]$, and a PSD matrix $\bB\in\pdm{\RR^{\ell}}$:
\begin{align}\label{eq:EmpiricalDualBrenier_finite_dim}
\begin{split}
    & \min_{\substack{f \in H^{m+2}(X), g \in H^{m+2}(Y), \\ \bB \in \pdm{\RR^\ell}}} ~~~  \langle f, \hat{\mu} \rangle + \langle g, \hat{\nu} \rangle + \la_1 \tr \bB + \la_2(\|f\|^2_{H^{m+2}(X)} + \|g\|^2_{H^{m+2}(Y)}) \\
    & \textrm{subject to}  ~~~ \forall j \in [\ell], ~~ f(\tilde{x}_j) + g(\tilde{y}_j) - \dotp{\tx_j}{\ty_j}  = \Phi_j^T\bB\Phi_j \, .
\end{split}
\end{align}
We refer to \cite{vacher2021dimensionfree} for the definition of $\Phi_j, j \in [\ell]$ and the derivation of \cref{eq:EmpiricalDualBrenier_finite_dim}.
Note that compared to \cref{eq:EmpiricalDualBrenier} we have two sets of samples that do not necessarily coincide: $x_1 \dots x_n \in X$ and $y_1, \dots, y_n \in Y$ which are the support points of the empirical distributions $\hat{\mu}$ and $\hat{\nu}$, and $(\tx_1, \ty_i) \dots (\tx_\ell, \ty_\ell) \in X \times Y$ which are used to subsample the dual constraints. 
One can then apply the method proposed by \cite{vacher2021dimensionfree} and solve  \eqref{eq:EmpiricalDualBrenier}
by using barrier methods on its dual formulation:
\begin{align}\label{eq:dual_barrier}
\begin{split}
    &\underset{\gamma \in \RR^\ell}{\min}~~ \frac{1}{4\la_2} \gamma^T \bQ \gamma - \frac{1}{2\la_2} \sum_{j=1}^\ell \gamma_j z_j + \frac{q^2}{2\la_2} - \frac{\delta}{\ell}\log \det \left(\sum_{j=1}^\ell \gamma_j \Phi_j\Phi_j^T + \la_1 \eye_\ell\right)\\
    &\mbox{ subject to } ~~~  \sum_{j=1}^\ell \gamma_j \Phi_j\Phi_j^T + \lambda_1 \eye_\ell \succeq 0,
    \end{split}
\end{align}
where $\bQ \in \RR^{\ell \times \ell}$ is defined as $\bQ_{i,j} = k(\tx_i, \ty_i) + k(\tx_i, \ty_i), i, j \in [\ell]$, $z_j = \tx_j \cdot \ty_j + w_{\hat{\mu}}(\tx_j) + w_{\hat{\nu}}(\ty_j)$, $q^2 = \|w_{\hat{\mu}}(\tx_j)\|_\hh^2 + \| w_{\hat{\nu}}(\ty_j)\|_\hh^2$, and $\delta > 0$ is the barrier parameter.\footnote{Compared to \cite{vacher2021dimensionfree}, the definition of $z$ above is slightly different. This is due to the fact that we consider the ``Brenier'' formulation of dual OT (an infimum with an inner product cost), instead of the Kantorovich formulation (a supremum with a quadratic cost).}
The authors showed that solving this problem to precision $\varepsilon$ has a $O(n^{3.5}\log \frac{1}{\varepsilon})$ computational cost.
This method has two drawbacks: first, its cost is prohibitive when the number of samples becomes large, which is necessary to ensure better statistical approximation, and second, there are no guarantees on the convergence to the minimizers $\hat{f}$ and $\hat{g}$ as only the objective value is guaranteed to converge to the optimal value. Both issues can be alleviated by introducing a strongly convex and unconstrained relaxation of problem \eqref{eq:EmpiricalDualBrenier}.

Let us now consider a variant of problem \eqref{eq:EmpiricalDualBrenier} with a squared norm penalty (instead of a trace penalty) on $\bB$:
\begin{align}\label{eq:w_hat_norm}
\begin{split}
    \inf_{\substack{f \in \hhx, g\in \hhy, \\  \bB \in \pdm{\RR^\ell}}} ~~~ & \scal{f}{\hat{w}_\mu}_\hhx + \scal{g}{\hat{w}_\nu}_\hhy + \frac{\la_1}{2} \|\bB\|_F^2 + \la_2(\|f\|^2_\hhx + \|g\|^2_\hhy) \\
    ~~~ \textrm{subject to} ~~~ &\forall j \in [\ell], ~~ f(\tilde{x}_j) + g(\tilde{y}_j) - \dotp{\tx_j}{\ty_j} =  \Phi_j^T\bB\Phi_j.
\end{split}
\end{align}
\begin{lemma}\label{lemma:frob_pen_dual}
\Cref{eq:w_hat_norm} admits the following dual problem:
 \begin{align}\label{eq:w_hat_norm_dual}
\begin{split}
      \underset{\gamma \in \RR^\ell}{\min}~~ \frac{1}{4\lambda_2} \gamma^T {\bf Q} \gamma - \frac{1}{2\lambda_2}\sum_{j=1}^\ell \gamma_j z_j + \frac{1}{2\la_1} \| (- \sum_{i=1}^\ell \gamma_i \Phi_i \Phi_i^T)_+\|_F^2 + \frac{q^2}{4\la_2},
 \end{split}
\end{align}
where $(\bM)_+$ denotes the positive part of $\bM \in \RR^\ell$, i.e.\ its projection on $\pdm{\RR^\ell}$.
\end{lemma}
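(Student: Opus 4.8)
The plan is to derive the dual of problem~\eqref{eq:w_hat_norm} by Lagrangian duality, exactly paralleling the derivation that yields~\eqref{eq:dual_barrier} but with the Frobenius penalty $\tfrac{\la_1}{2}\|\bB\|_F^2$ replacing the trace penalty $\la_1\tr\bB$. First I would introduce a multiplier $\gamma \in \RR^\ell$ for the $\ell$ equality constraints $f(\tx_j)+g(\ty_j)-\dotp{\tx_j}{\ty_j} = \Phi_j^T\bB\Phi_j$, and write the Lagrangian
\begin{equation*}
    \mathcal{L}(f,g,\bB,\gamma) = \scal{f}{\hat w_\mu}_\hhx + \scal{g}{\hat w_\nu}_\hhy + \tfrac{\la_1}{2}\|\bB\|_F^2 + \la_2(\|f\|_\hhx^2 + \|g\|_\hhy^2) + \sum_{j=1}^\ell \gamma_j\bigl(\Phi_j^T\bB\Phi_j - f(\tx_j) - g(\ty_j) + \dotp{\tx_j}{\ty_j}\bigr).
\end{equation*}
Using the reproducing property $f(\tx_j) = \scal{f}{\phi(\tx_j)}_\hhx$ (and the notation $\hat w_\mu$, $\phi$ as in the paper), the $f$-part becomes $\scal{f}{\hat w_\mu - \sum_j \gamma_j \phi(\tx_j)}_\hhx + \la_2\|f\|_\hhx^2$, which is an unconstrained strongly convex quadratic in $f$; minimizing gives $f = -\tfrac{1}{2\la_2}(\hat w_\mu - \sum_j\gamma_j\phi(\tx_j))$ and contributes $-\tfrac{1}{4\la_2}\|\hat w_\mu - \sum_j\gamma_j\phi(\tx_j)\|_\hhx^2$, and symmetrically for $g$. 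Expanding these squared norms in terms of the kernel produces the terms $\tfrac14\gamma^T\bQ\gamma$, $-\tfrac12\sum_j\gamma_j z_j$ and the constant $\tfrac{q^2}{4}$ (all divided by $\la_2$), matching the first two and last terms of~\eqref{eq:w_hat_norm_dual}; this part is essentially identical to the computation already done by~\cite{vacher2021dimensionfree}.

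The step that genuinely differs is the minimization over $\bB \in \pdm{\RR^\ell}$. Writing $\bM := -\sum_{i=1}^\ell \gamma_i\Phi_i\Phi_i^T$, the $\bB$-dependent part of the Lagrangian is $\tfrac{\la_1}{2}\|\bB\|_F^2 - \scal{\bM}{\bB}_F$, to be minimized over the PSD cone. I would compute this constrained minimum explicitly: completing the square gives $\tfrac{\la_1}{2}\|\bB - \tfrac1{\la_1}\bM\|_F^2 - \tfrac1{2\la_1}\|\bM\|_F^2$, so the minimizer is the Euclidean projection of $\tfrac1{\la_1}\bM$ onto $\pdm{\RR^\ell}$, i.e.\ $\bB^\star = \tfrac1{\la_1}(\bM)_+$ where $(\cdot)_+$ zeroes out the negative eigenvalues. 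The optimal value is then $\tfrac{\la_1}{2}\|\bB^\star - \tfrac1{\la_1}\bM\|_F^2 - \tfrac1{2\la_1}\|\bM\|_F^2 = \tfrac1{2\la_1}\|(\bM)_- \|_F^2 - \tfrac1{2\la_1}\|\bM\|_F^2 = -\tfrac1{2\la_1}\|(\bM)_+\|_F^2$, using the orthogonal decomposition $\bM = (\bM)_+ + (\bM)_-$ into PSD and NSD parts. Since the dual is the negative of a sum of minimized pieces, this contributes $+\tfrac1{2\la_1}\|(\bM)_+\|_F^2 = \tfrac1{2\la_1}\|(-\sum_i\gamma_i\Phi_i\Phi_i^T)_+\|_F^2$, which is exactly the third term of~\eqref{eq:w_hat_norm_dual}. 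Collecting all pieces and negating (to pass from $\sup_\gamma$ of a concave function to $\min_\gamma$ of its negative) yields precisely~\eqref{eq:w_hat_norm_dual}.

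Finally I would justify that strong duality holds and that the stated problem is indeed the dual with no duality gap: the primal~\eqref{eq:w_hat_norm} is a convex problem (convex quadratic objective, PSD-cone constraint on $\bB$, affine equality constraints), and it is strictly feasible in the sense needed for a Slater-type condition — one can always pick $\bB = t\,\eye_\ell$ with $t$ large enough, and then choose $f, g$ (e.g.\ spline interpolants) satisfying the $\ell$ equality constraints, since $\hhx, \hhy$ are infinite-dimensional RKHS with $\ell$ linearly independent evaluation functionals. Hence Fenchel--Rockafellar / conic duality applies and the supremum over $\gamma$ is attained with zero gap. The main obstacle, though purely computational rather than conceptual, is the careful eigenvalue bookkeeping in the $\bB$-minimization: one must verify that projection onto $\pdm{\RR^\ell}$ in Frobenius norm really is the spectral truncation $(\cdot)_+$ (a standard fact, e.g.\ via the von Neumann trace inequality / the fact that $\|\bB - \bN\|_F^2$ is minimized over symmetric $\bB \succeq 0$ by keeping the positive spectral part of $\bN$), and that the resulting optimal value simplifies via $\|\bM\|_F^2 = \|(\bM)_+\|_F^2 + \|(\bM)_-\|_F^2$. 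Everything else is a routine transcription of the argument already present in~\cite{vacher2021dimensionfree}.
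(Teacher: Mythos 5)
Your proof is correct and follows the same Lagrangian-duality route as the paper: eliminate $f,g$ via first-order conditions (giving the $\gamma^T\bQ\gamma$, linear, and constant terms exactly as in the trace-penalty case), then complete the square in $\bB$ and project onto the PSD cone to obtain the $\|(-\sum_i\gamma_i\Phi_i\Phi_i^T)_+\|_F^2$ term. You merely make explicit a few steps the paper's proof leaves implicit — that the Frobenius-norm projection onto $\pdm{\RR^\ell}$ is spectral truncation, the orthogonal split $\|\bM\|_F^2 = \|(\bM)_+\|_F^2 + \|(\bM)_-\|_F^2$ — and append a Slater-type strong-duality check that the paper does not include.
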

Proof in \Cref{sec:algo_proofs}, \cpageref{proof:frob_pen_dual}. Although the bounds described in \Cref{sec:rates} and by \cite{vacher2021dimensionfree} rely on a trace control of $\bB$, we conjecture that similar results could be derived using a square Hilbert-Schmidt norm control.

\paragraph{Faster algorithms with strong convexity.}

Replacing the trace penalty with a square norm penalty led to the {\em unconstrained} convex dual problem \eqref{eq:w_hat_norm_dual}, without adding a barrier term. However, this problem is not necessarily strongly convex. Indeed, $\bQ$ may have arbitrarily small eigenvalues, and the positive part term vanishes when its argument is a negative matrix. 
We now propose a strongly convex relaxation of \eqref{eq:w_hat_norm_dual}, obtained by replacing the constraints in \eqref{eq:w_hat_norm} with a quadratic penalization. That is, for $\delta > 0$, we consider:
\begin{align}\label{eq:w_hat_norm_pen}
\begin{split}
    \inf_{\substack{f \in \hhx, g\in \hhy, \\  \bB \in \pdm{\RR^d}}} ~~~ & \Big\lbrace\scal{f}{\hat{w}_\mu}_\hhx + \scal{g}{\hat{w}_\nu}_\hhy + \frac{\la_1}{2} \|\bB\|_F^2 + \la_2(\|f\|^2_\hhx + \|g\|^2_\hhy) \\ 
    & \hspace*{2cm} + \frac{\delta}{2\ell} \sum_{j=1}^\ell  (f(\tilde{x}_j) + g(\tilde{y}_j) - \dotp{\tilde{x}_j}{\tilde{y}_j} -  \Phi_j^T\bB\Phi_j)^2\Big\rbrace.
\end{split}
\end{align}
Let us derive the corresponding dual problem.

\begin{proposition}\label{prop:relaxed_dual}
Strong duality holds, and problem \eqref{eq:w_hat_norm_pen} admits the following dual formulation:
\begin{align}\label{eq:w_hat_norm_pen_dual} 
\begin{split}
    \inf_{\gamma \in \RR^\ell} ~~~ \frac{1}{4\lambda_2} \gamma^T {\bf Q} \gamma - \frac{1}{2\lambda_2}\sum_{j=1}^\ell \gamma_j z_j + \frac{1}{2\la_1} \| (- \sum_{i=1}^\ell \gamma_i \Phi_i \Phi_i^T)_+\|_F^2 + \frac{\ell}{2\delta} \|\gamma\|^2+ \frac{q^2}{4\la_2},
\end{split}
\end{align}
with the following primal-dual relations
\begin{align}\label{eq:fg_opt}
    \begin{split}
    f &= \frac{1}{2\lambda_2}(\sum_{i=1}^\ell \gamma_i \phi_X(\tilde{x}_i) - \hat{w}_\mu) \\
    g &= \frac{1}{2\lambda_2}(\sum_{i=1}^\ell \gamma_i \phi_Y(\tilde{y}_i) - \hat{w}_\nu).
    \end{split}
\end{align}
This problem is strongly convex, with constant $\alpha = \frac{\ell}{\delta} +  \frac{1}{2 \la_2}\lambda_{\min}(\bQ)$, and smooth, with constant $L = \frac{\ell}{\delta} + \frac{1}{2\la_2} \la_{\max}(\bQ) + \frac{1}{\la_1} \lambda_{\max}(\bK\circ \bK)$ (where $\bK = \Phi^T\Phi$).
\end{proposition}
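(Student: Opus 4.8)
The plan is to obtain \eqref{eq:w_hat_norm_pen_dual} by Fenchel--Rockafellar duality, in the same way as \Cref{lemma:frob_pen_dual}, the only new ingredient being the quadratic penalty, which smooths the dual. I would write \eqref{eq:w_hat_norm_pen} as $\inf_{(f,g,\bB)} F(f,g,\bB) + G(\mathcal{A}(f,g,\bB))$, where $\mathcal{A}:\hhx\times\hhy\times\pdm{\RR^\ell}\to\RR^\ell$ is the bounded linear operator $\mathcal{A}(f,g,\bB)_j = f(\tx_j)+g(\ty_j)-\Phi_j^T\bB\Phi_j$ (point evaluations are bounded since $\hhx,\hhy$ are RKHS), $F(f,g,\bB) = \scal{f}{\hat w_\mu}_\hhx + \scal{g}{\hat w_\nu}_\hhy + \la_2(\|f\|^2_\hhx+\|g\|^2_\hhy) + \tfrac{\la_1}{2}\|\bB\|_F^2 + \iota_{\pdm{\RR^\ell}}(\bB)$, and $G(c) = \tfrac{\delta}{2\ell}\sum_{j}(c_j - \dotp{\tx_j}{\ty_j})^2$. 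Since $G$ is finite and continuous on all of $\RR^\ell$, the qualification condition holds trivially, so strong duality holds and the dual supremum is attained; moreover $F$ is strongly convex, proper, lower semicontinuous and coercive, so the primal infimum is attained at a unique point. The dual then reads $\sup_\gamma\,-F^*(-\mathcal{A}^*\gamma) - G^*(\gamma)$ (up to a sign convention on $\gamma$ dictated by the definition of $z$).

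Next I would compute the three conjugate pieces. For the $(f,g)$ block, using the reproducing property $f(\tx_j)=\scal{f}{\phi_X(\tx_j)}_\hhx$ and $g(\ty_j)=\scal{g}{\phi_Y(\ty_j)}_\hhy$, the partial minimization over $(f,g)$ is a strongly convex quadratic whose minimizer is exactly \eqref{eq:fg_opt}; substituting back and expanding $\|\sum_i\gamma_i\phi_X(\tx_i) - \hat w_\mu\|^2_\hhx + \|\sum_i\gamma_i\phi_Y(\ty_i) - \hat w_\nu\|^2_\hhy$ produces, through the definitions of $\bQ$, $z$ and $q^2$, the terms $\tfrac{1}{4\la_2}\gamma^T\bQ\gamma - \tfrac{1}{2\la_2}\sum_j\gamma_j z_j + \tfrac{q^2}{4\la_2}$. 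For the $\bB$ block, the partial minimization is $\inf_{\bB\succeq 0}\tfrac{\la_1}{2}\|\bB\|_F^2 + \scal{\bB}{\sum_i\gamma_i\Phi_i\Phi_i^T}_F$, whose minimizer is $\bB = \tfrac{1}{\la_1}(-\sum_i\gamma_i\Phi_i\Phi_i^T)_+$ with optimal value $-\tfrac{1}{2\la_1}\|(-\sum_i\gamma_i\Phi_i\Phi_i^T)_+\|_F^2$, i.e.\ minus the term appearing in \eqref{eq:w_hat_norm_pen_dual}; this uses the Moreau identity $\scal{(N)_+}{N}_F = \|(N)_+\|_F^2$, where $(\cdot)_+$ is the projection onto $\pdm{\RR^\ell}$. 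Finally $G^*(\gamma) = \sum_j\gamma_j\dotp{\tx_j}{\ty_j} + \tfrac{\ell}{2\delta}\|\gamma\|^2$, whose linear part merges into the $z$-term and whose quadratic part is the new $\tfrac{\ell}{2\delta}\|\gamma\|^2$. Collecting everything (and flipping the overall sign to write a min) gives \eqref{eq:w_hat_norm_pen_dual}; equivalently, one may just invoke \Cref{lemma:frob_pen_dual} and add $\tfrac{\ell}{2\delta}\|\gamma\|^2$, since penalizing the constraint of \eqref{eq:w_hat_norm} by $\tfrac{\delta}{2\ell}\|\cdot\|^2$ is dual to Tikhonov regularization of \eqref{eq:w_hat_norm_dual}.

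For the strong-convexity and smoothness constants, I would split the dual objective $D(\gamma)$ into its quadratic part, $\tfrac{1}{4\la_2}\gamma^T\bQ\gamma + \tfrac{\ell}{2\delta}\|\gamma\|^2 - \tfrac{1}{2\la_2}\sum_j\gamma_jz_j + \tfrac{q^2}{4\la_2}$, which has constant Hessian $\tfrac{1}{2\la_2}\bQ + \tfrac{\ell}{\delta}\eye$ and hence eigenvalues in $[\tfrac{\ell}{\delta}+\tfrac{1}{2\la_2}\lambda_{\min}(\bQ),\ \tfrac{\ell}{\delta}+\tfrac{1}{2\la_2}\lambda_{\max}(\bQ)]$, plus the term $h(\gamma) = \tfrac{1}{2\la_1}\|(-M(\gamma))_+\|_F^2$ where $M(\gamma) = \sum_i\gamma_i\Phi_i\Phi_i^T$ is linear. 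The map $N\mapsto\tfrac12\|(N)_+\|_F^2$ is convex with gradient $N\mapsto(N)_+$ (Moreau), and $(\cdot)_+$ is nonexpansive, so $h$ is convex with $\nabla h(\gamma) = -\tfrac{1}{\la_1}M^*((-M(\gamma))_+)$ and, for all $\gamma,\gamma'$, $\|\nabla h(\gamma)-\nabla h(\gamma')\|_2 \le \tfrac{1}{\la_1}\|M^*\|_{\mathrm{op}}\,\|(-M\gamma)_+ - (-M\gamma')_+\|_F \le \tfrac{1}{\la_1}\|M^*\|_{\mathrm{op}}\,\|M\gamma-M\gamma'\|_F \le \tfrac{1}{\la_1}\|M\|_{\mathrm{op}}^2\,\|\gamma-\gamma'\|_2$. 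Then $\|M\|_{\mathrm{op}}^2 = \sup_{\|\gamma\|_2=1}\|\sum_i\gamma_i\Phi_i\Phi_i^T\|_F^2 = \sup_{\|\gamma\|_2=1}\sum_{i,j}\gamma_i\gamma_j(\Phi_i^T\Phi_j)^2 = \lambda_{\max}(\bK\circ\bK)$, using $\scal{\Phi_i\Phi_i^T}{\Phi_j\Phi_j^T}_F = (\Phi_i^T\Phi_j)^2 = \bK_{ij}^2$. Hence $h$ is convex and $\tfrac{1}{\la_1}\lambda_{\max}(\bK\circ\bK)$-smooth, so $D$ is $\alpha$-strongly convex with $\alpha = \tfrac{\ell}{\delta}+\tfrac{1}{2\la_2}\lambda_{\min}(\bQ)$ and $L$-smooth with $L = \tfrac{\ell}{\delta}+\tfrac{1}{2\la_2}\lambda_{\max}(\bQ)+\tfrac{1}{\la_1}\lambda_{\max}(\bK\circ\bK)$.

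The main obstacle is the $\bB$-block: conjugating the Frobenius penalty under the PSD-cone constraint to produce the positive-part term (where the projection onto $\pdm{\RR^\ell}$ and the Moreau decomposition enter), together with the Lipschitz-gradient estimate for $h$ and the identity $\|M\|_{\mathrm{op}}^2 = \lambda_{\max}(\bK\circ\bK)$. Establishing strong duality and reading off the primal--dual relations \eqref{eq:fg_opt} is then routine, because the penalized primal is everywhere finite in the variable $c = \mathcal{A}(f,g,\bB)$ and strongly convex in $(f,g,\bB)$.
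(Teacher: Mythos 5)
Your proof is correct and follows essentially the same route as the paper: complete the square over $(f,g)$ and over $\bB$ (via the Moreau decomposition on $\pdm{\RR^\ell}$) to obtain the dual, with the quadratic penalty dualizing to the $\tfrac{\ell}{2\delta}\|\gamma\|^2$ term, then split the dual objective into a quadratic part plus the convex positive-part term to read off $\alpha$ and $L$. You spell out two points the paper leaves terse — justifying strong duality through the Fenchel--Rockafellar qualification condition rather than merely asserting min-max exchange, and deriving the $\tfrac{1}{\la_1}\lambda_{\max}(\bK\circ\bK)$ Lipschitz bound explicitly from nonexpansiveness of the PSD projection and the identity $\sup_{\|\gamma\|=1}\|\sum_i\gamma_i\Phi_i\Phi_i^T\|_F^2 = \lambda_{\max}(\bK\circ\bK)$ — but these are refinements, not a different argument.
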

Formulation \eqref{eq:w_hat_norm_pen_dual} is a tight relaxation of \eqref{eq:w_hat_norm_dual}, as the dual of the constrained problem may be recovered by taking $\delta \rightarrow \infty$.
As mentioned above, for most kernels the spectrum of $\bQ$ decays rapidly to zero. Therefore, problem \eqref{eq:w_hat_norm_dual} may not be considered strongly convex for practical purposes. On the other hand, problem \eqref{eq:w_hat_norm_pen_dual} has a strong convexity constant that is bounded from under by $\frac{\delta}{\ell}$, thanks to the quadratic regularization term on $\gamma$ that appears in \cref{eq:w_hat_norm_pen_dual}.
Hence, we may solve \eqref{eq:w_hat_norm_pen_dual} using accelerated gradient descent algorithms. The bottleneck of the computation is the positive part, which is obtained by computing the SVD of a $\ell\times\ell$ matrix in $O(\ell^3)$ time. The size of this matrix can be reduced by considering a low-rank approximation of $\bK$. 

Note however that relaxing the constraints in \eqref{eq:w_hat_norm} with a quadratic penalization introduces a gap with the theory developed in \Cref{sec:rates}. However, we expect that bounds could be obtained in the case of quadratic penalties, much like soft constraints compared with hard constraints in kernel ridge regression. We leave this for future work. 

\paragraph{Nyström approximation.}

The computational bottleneck of solving \eqref{eq:w_hat_norm_pen_dual} is forming the matrix $\sum_{i=1}^\ell \gamma_i \Phi_i \Phi_i^T$ and computing the SVD at each iteration, both  in $O(\ell^3)$ time. This computational cost can be reduced using a Nyström approximation~\citep[see e.g.][]{williams2001using} of the kernel matrix $\bK$. Indeed, such an approximation is of the form
\begin{align}
    \bK^{nys} = \bV\bW^{-1}\bV^T,
\end{align}
where $\bW \in \RR^{r\times r}$ is the submatrix corresponding to $r < \ell$ randomly sampled columns (and corresponding rows) of $\bK$, and $\bV \in \RR^{\ell\times r}$. Hence, writing $\bW = \bL^T\bL$ with $\bL \in \RR^{r \times r}$, we may consider features $\Phi^{nys}_{i} \in \RR^r, i \in [\ell]$ corresponding to the columns of 
\begin{align}
\bR^{nys} \defeq \bL^{—T}\bV^T,
\end{align}
verifying $(\bR^{nys})^T\bR^{nys} = \bK^{nys}$. Replacing $\bK$ with $\bK^{nys}$ in the derivation of the dual leads to  

\begin{align}\label{eq:w_hat_norm_pen_dual_low_rank} 
\begin{split}
    \inf_{\gamma \in \RR^\ell} ~~~& \frac{1}{4\lambda_2} \gamma^T {\bf Q} \gamma - \frac{1}{2\lambda_2}\sum_{j=1}^\ell \gamma_j z_j + \frac{1}{2\la_1} \| (- \sum_{j=1}^\ell \gamma_j \Phi_j^{(nys)} (\Phi_j^{nys})^T)_+\|_F^2 + \frac{\ell}{2\delta} \|\gamma\|^2+ \frac{q^2}{4\la_2}.
\end{split}
\end{align}
Forming the matrix $\sum_{i=1}^\ell\gamma_i \Phi_i^{(nys)} (\Phi_i^{nys})^T$ costs $O( r^2\ell)$ (compared to $O(\ell^3)$ in \eqref{eq:w_hat_norm_pen_dual}), and computing its SVD costs $O(r^3)$ (compared to $O(\ell^3)$). This brings down the cost of a gradient step to $O(r^3 + r^2\ell)$, compared to $O(\ell^3)$ without using approximations.

The empirical results in \Cref{fig:2D_gaussian,fig:4D_gaussian,fig:8D_gaussian} tend to show that one can pick quite small ranks while retaining good performance: indeed, in those experiments the maximum employed rank is $100$, whereas we increase the number of samples up to $1000$. This is coherent with results on the Nyström approximation applied to statistical learning~\citep{rudi2015less}. However, the effect of Nyström approximation is currently not measured in our statistical analysis.

\paragraph{Selecting the hyperparameters.}

Contrary to the optimization setting~\citep{rudi2020global}, there is no direct way of assessing the quality of the estimator $\hat{OT}$ (e.g.\, of the ``smaller is better'' type) for a given choice of hyperparameters, as the algorithm may output values that are larger or smaller than the true value of $\OT$. Alternatively, we may use the output potentials to estimate transportation maps $\hat{T}_1, \hat{T}_2$ from $\mu$ to $\nu$ and $\nu$ to $\mu$, and compare the mapped points to the target distributions. That is, we compute 
\begin{align}\label{eq:forward_backward_map}
\begin{split}
\hat{T}_1 &: x \rightarrow \nabla f(x) =  \frac{1}{2\la_2} \left(\sum_{j=1}^{\ell}\gamma_j \nabla_x k(\tx_j, x) - \nabla_x \hat{w}_\mu(x) \right)\\
\hat{T}_2 &: x \rightarrow \nabla f(x) =  \frac{1}{2\la_2} \left(\sum_{j=1}^{\ell}\gamma_j \nabla_x k(\ty_j, x) - \nabla_x \hat{w}_\nu(x) \right),
\end{split}
\end{align}
and select the hyperparameters for which
\begin{equation}\label{eq:mmd_criterion}
    \widehat{\MMD} \defeq \MMD\left(\frac{1}{n_\mu}\sum_{i=1}\delta_{\hat{T}_1 (x_i)}, \hat{\nu}\right) + \MMD\left(\frac{1}{n_\nu}\sum_{i=1}\delta_{\hat{T}_2 (y_i)}, \hat{\mu}\right)
\end{equation}
is the smallest, where MMD is the maximum mean discrepancy~\citep{gretton2012kernel} for the kernel $k$, defined as
\begin{equation}
    \MMD(\mu, \nu) \defeq \iint (k(x, x) -  2k(x, y) + k(y, y))\d(\mu\otimes\nu)(x, y).
\end{equation}
The choice of MMD as a criterion has two main motivations: first, it is a divergence that can be efficiently computed and whose plugin estimator converges at rate $O(n\smrt)$, and second, as observed by \cite{vacher2021dimensionfree} in Appendix F, when the filing samples $\tx_j, \ty_j, j \in [\ell]$ correspond to all pairs of samples in $\hat{\mu}$ and $\hat{\nu}$, the dual problem \eqref{eq:dual_barrier} may be reformulated as a regularized optimal transport problem with MMD marginal penalties.
\paragraph{Numerical results.}
\begin{figure}
    \centering
    \begin{subfigure}[b]{0.45\textwidth}
         \centering
         \includegraphics[width=\textwidth]{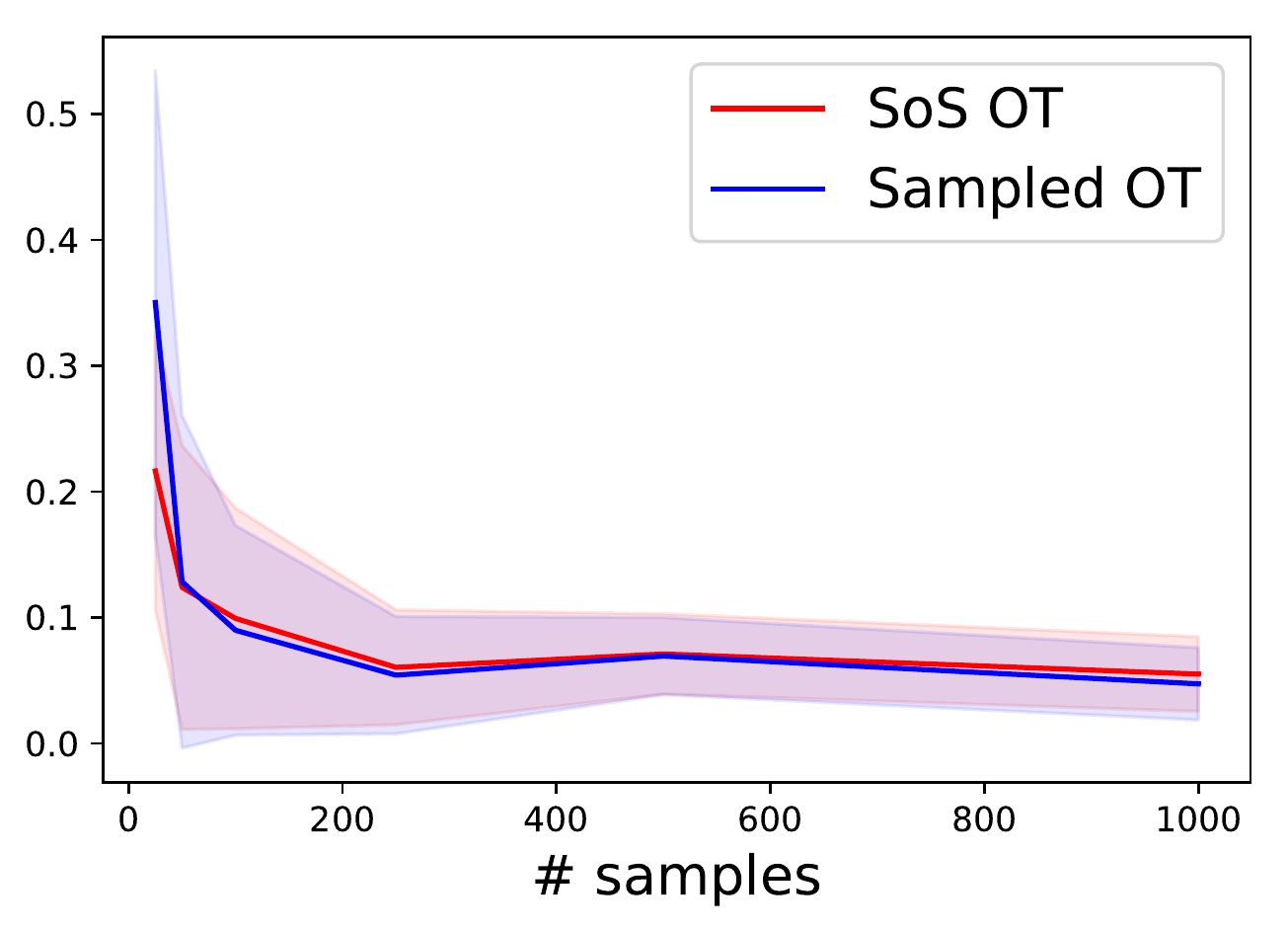}
     \end{subfigure}
     \hfill
         \begin{subfigure}[b]{0.45\textwidth}
         \centering
         \includegraphics[width=\textwidth]{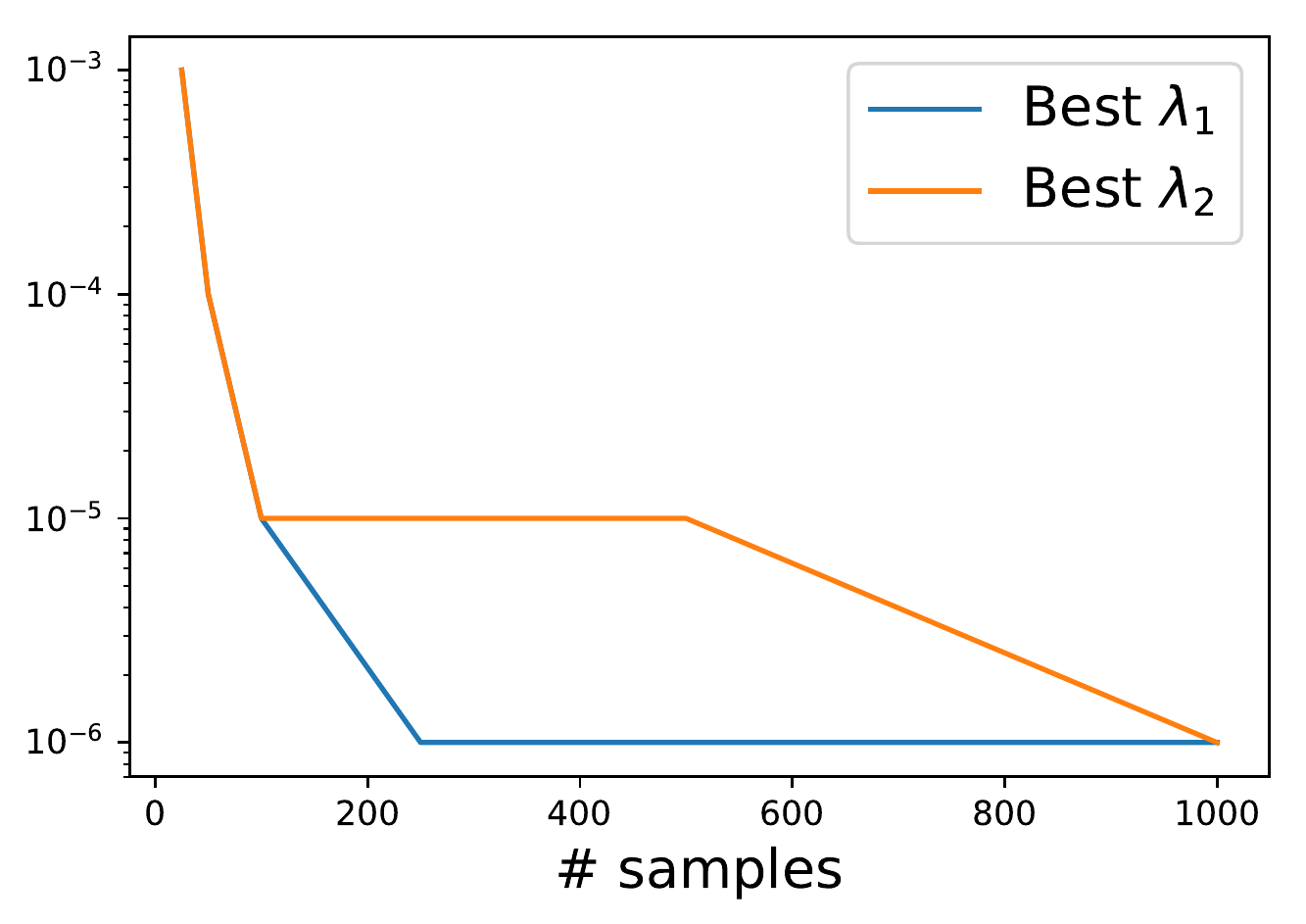}
     \end{subfigure}
     \begin{subfigure}[b]{0.45\textwidth}
         \centering
         \includegraphics[width=\textwidth]{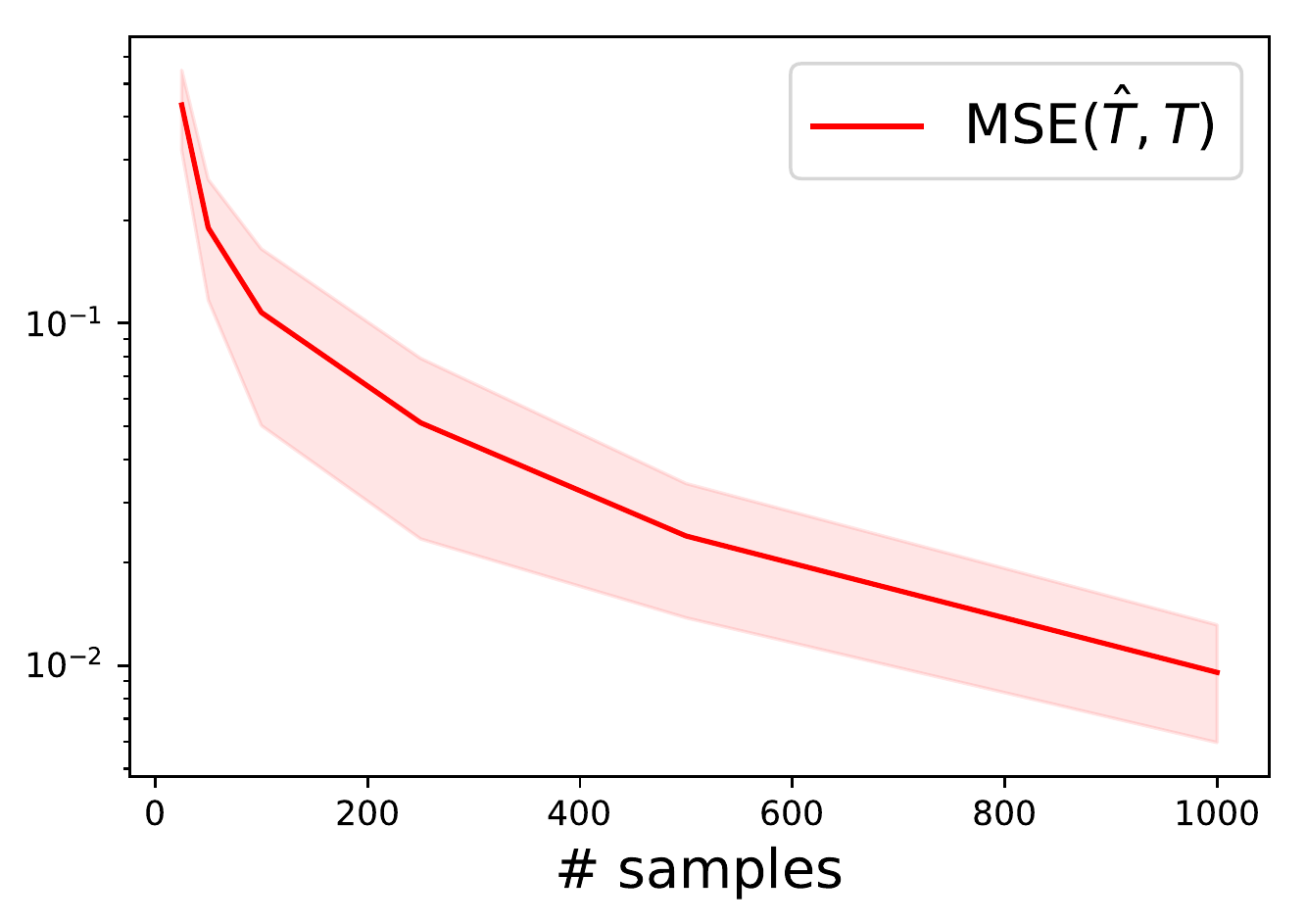}
     \end{subfigure}
     \hfill
         \begin{subfigure}[b]{0.45\textwidth}
         \centering
         \includegraphics[width=\textwidth]{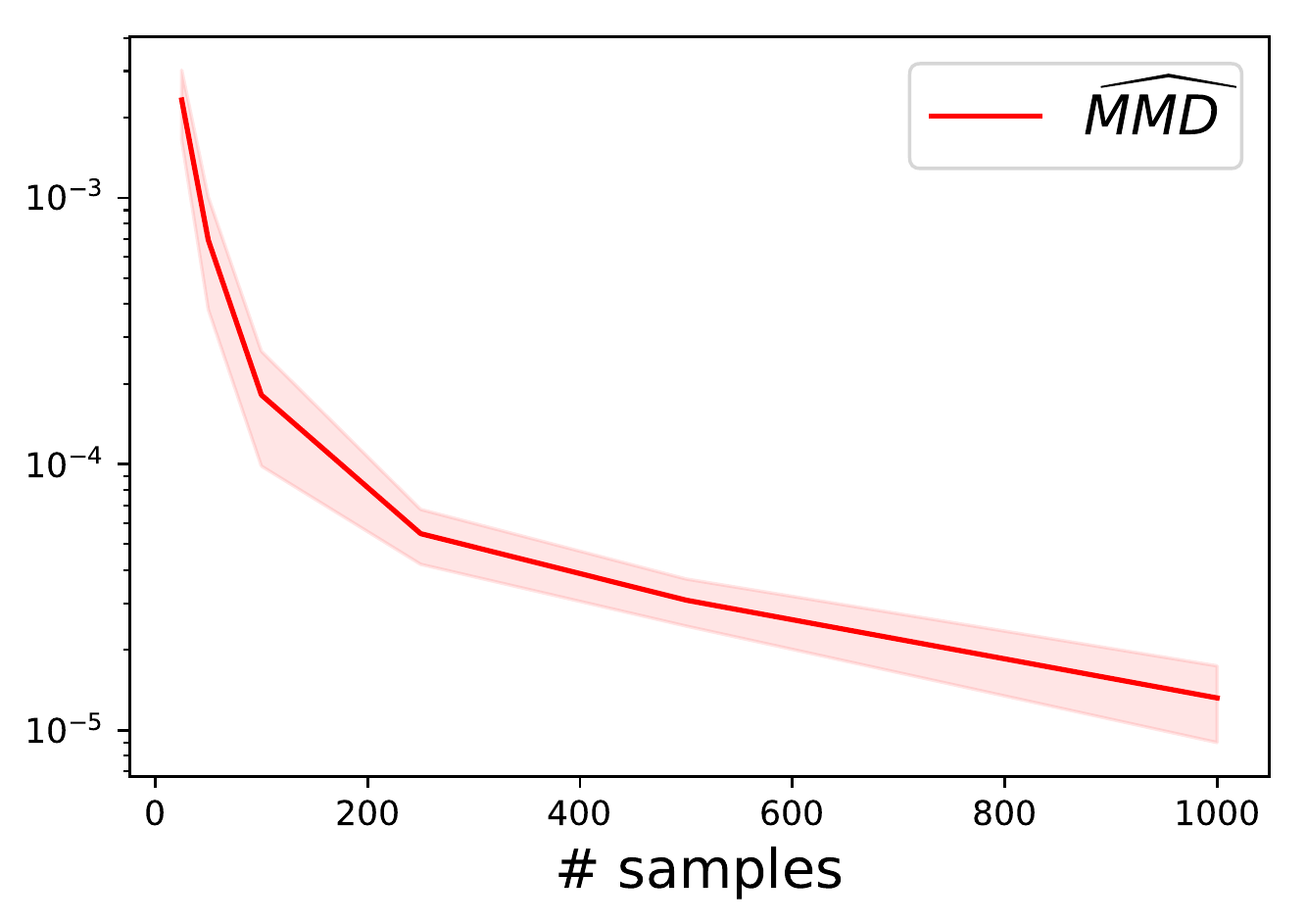}
     \end{subfigure}
    \caption{2D Gaussian data. {\em(Top left)} OT error $|\widehat{OT} - OT|$. {\em (Top right)} Best hyperparameters (selected via gridsearch), log scale. {\em (Bottom left)} Transportation map mean square error, log scale. {\em (Bottom right)} MMD between transported samples, log scale. Shaded areas correspond to $\pm$ std.  Algorithm: accelerated gradient descent ($\delta = 10^3$), Sobolev kernel ($s=20$, bandwidth $=1$), with Nystr\"om approximation (rank = $100$). Filling pairs $(\tx_i, \ty_i)$ are drawn from $\mu \otimes \nu$. The number of filling pairs is equal to the number of $\mu$ and $\nu$ samples, reported on the x-axis.}
    \label{fig:2D_gaussian}
\end{figure}

\begin{figure}
    \centering
    \begin{subfigure}[b]{0.45\textwidth}
         \centering
         \includegraphics[width=\textwidth]{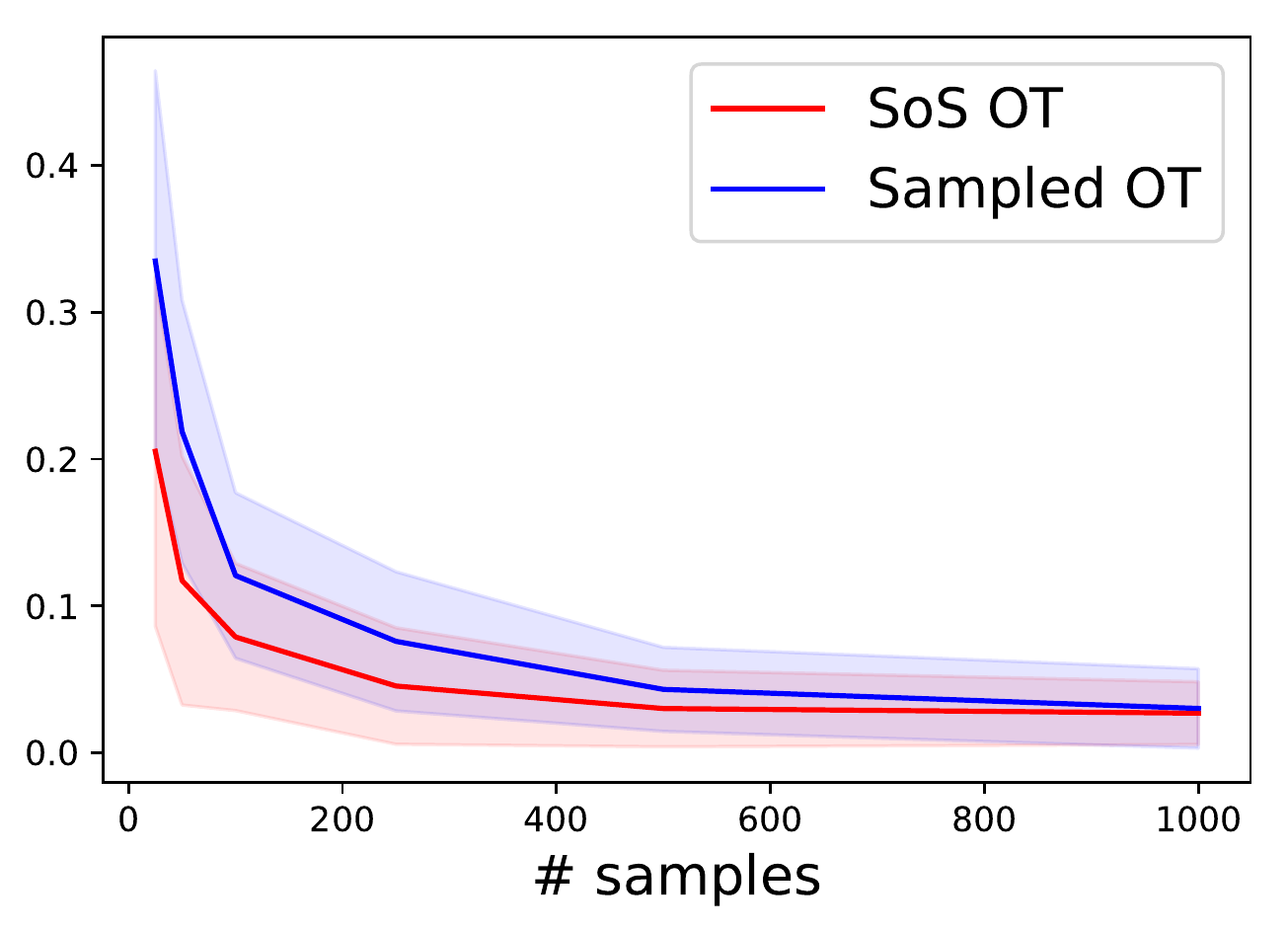}
     \end{subfigure}
     \hfill
         \begin{subfigure}[b]{0.45\textwidth}
         \centering
         \includegraphics[width=\textwidth]{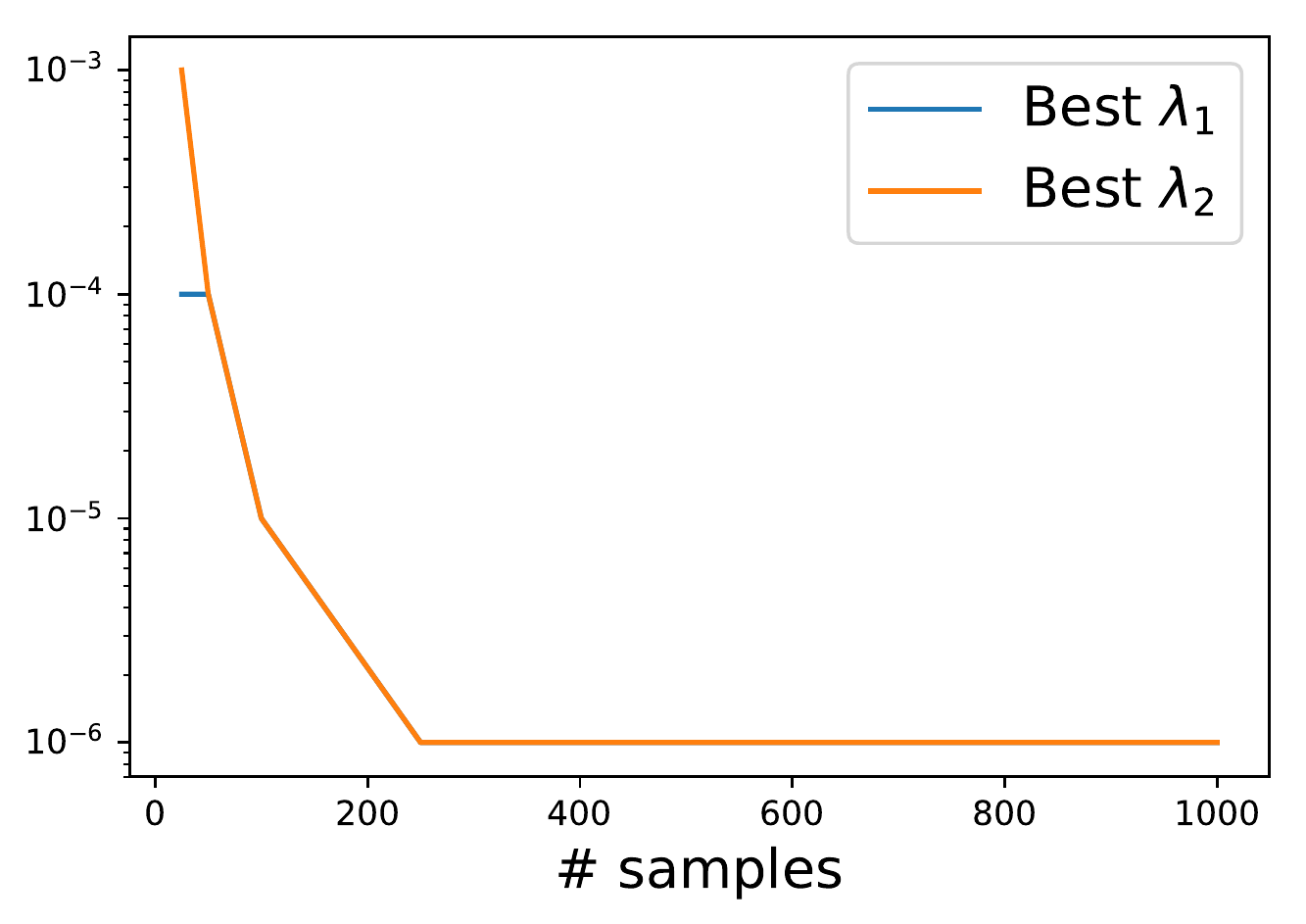}
     \end{subfigure}
     \begin{subfigure}[b]{0.45\textwidth}
         \centering
         \includegraphics[width=\textwidth]{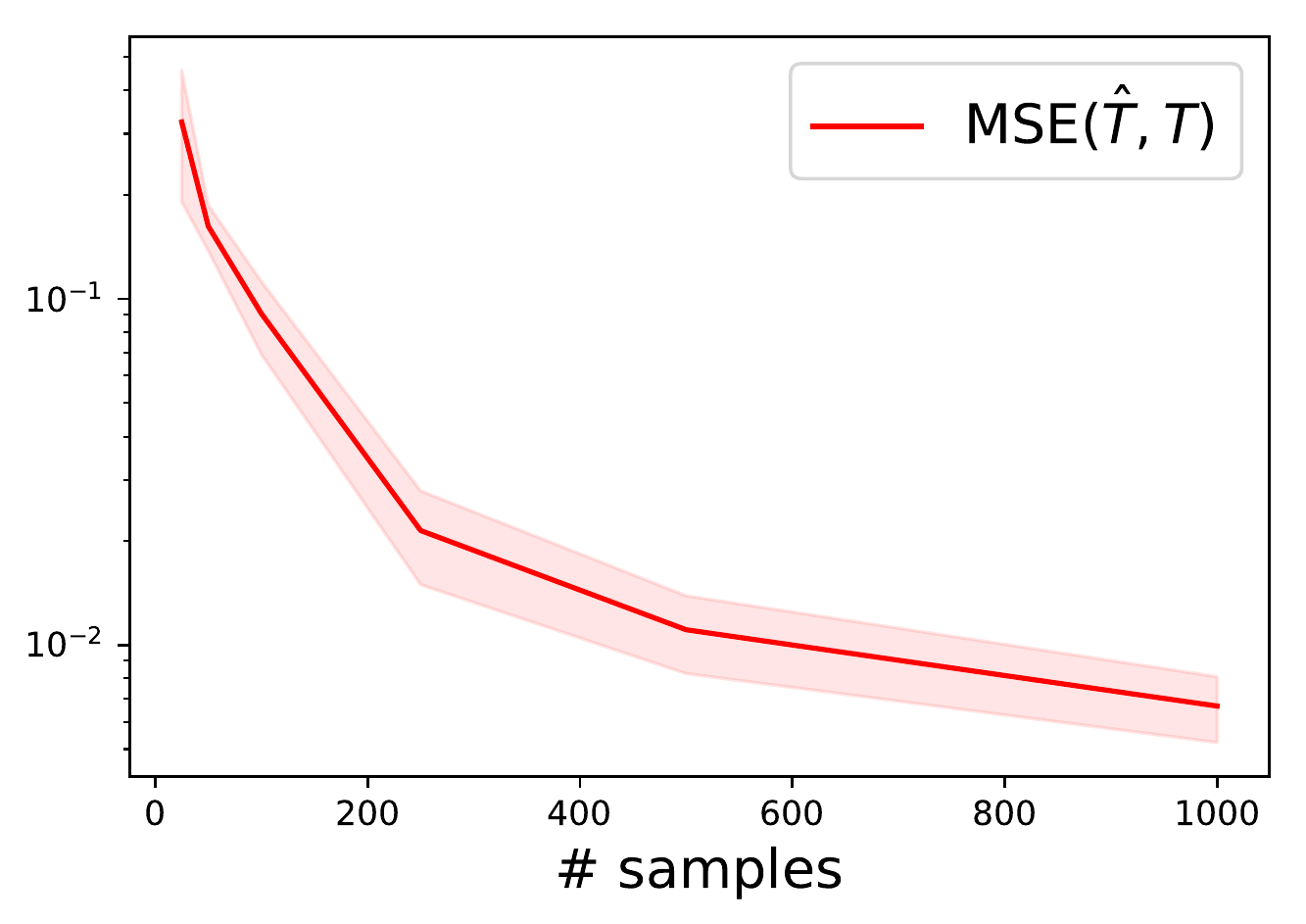}
     \end{subfigure}
     \hfill
         \begin{subfigure}[b]{0.45\textwidth}
         \centering
         \includegraphics[width=\textwidth]{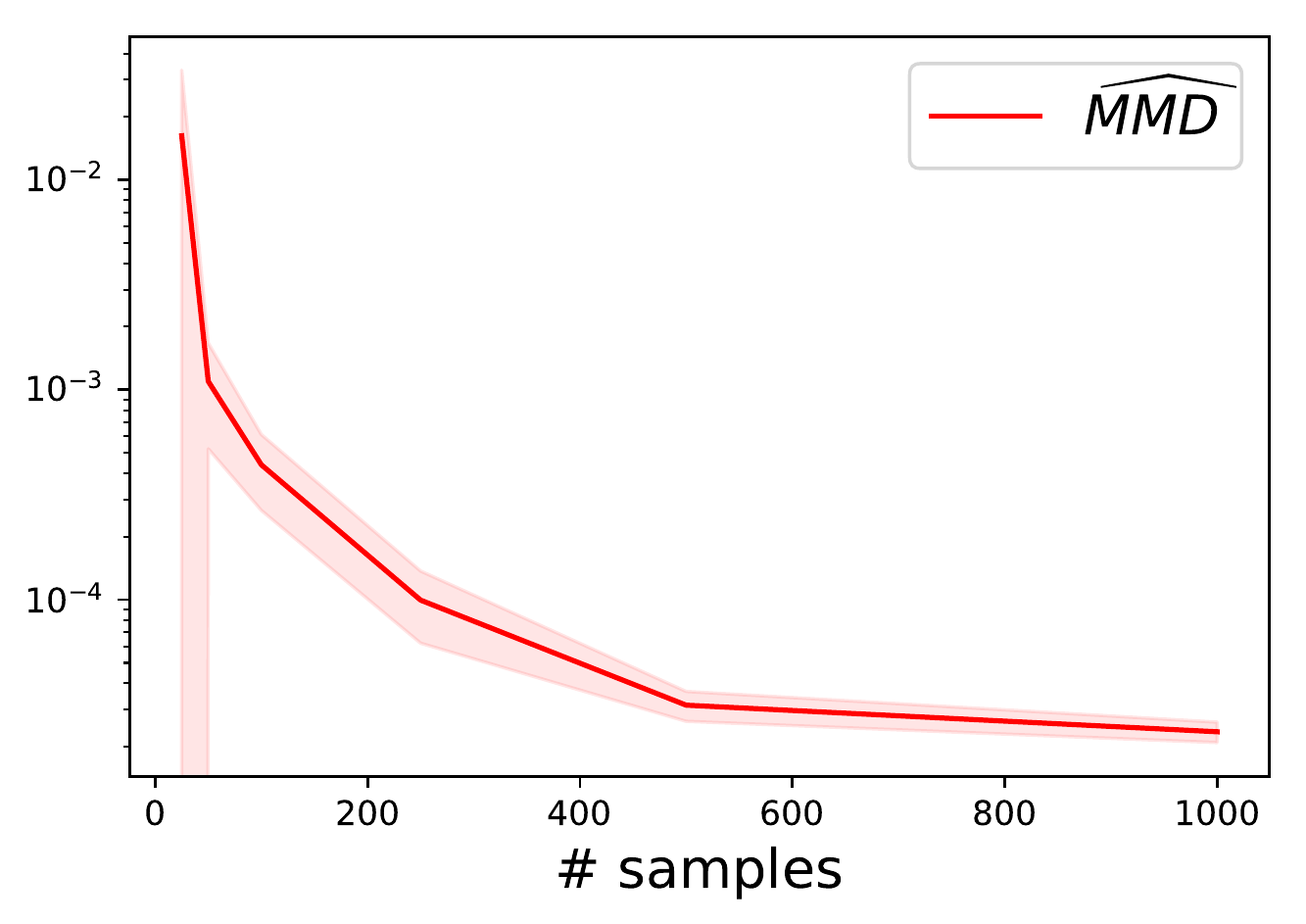}
     \end{subfigure}
    \caption{4D Gaussian data. {\em(Top left)} OT error $|\widehat{OT} - OT|$. {\em (Top right)} Best hyperparameters (selected via gridsearch), log scale. {\em (Bottom left)} Transportation map mean square error, log scale. {\em (Bottom right)} MMD between transported samples, log scale. Shaded areas correspond to $\pm$ std. Algorithm: accelerated gradient descent ($\delta = 10^3$), Sobolev kernel ($s=20$, bandwidth $=1$), with Nystr\"om approximation (rank = $100$). Filling pairs $(\tx_i, \ty_i)$ are drawn from $\mu \otimes \nu$. The number of filling pairs is equal to the number of $\mu$ and $\nu$ samples, reported on the x-axis.}
    \label{fig:4D_gaussian}
\end{figure}

\begin{figure}
    \centering
    \begin{subfigure}[b]{0.45\textwidth}
         \centering
         \includegraphics[width=\textwidth]{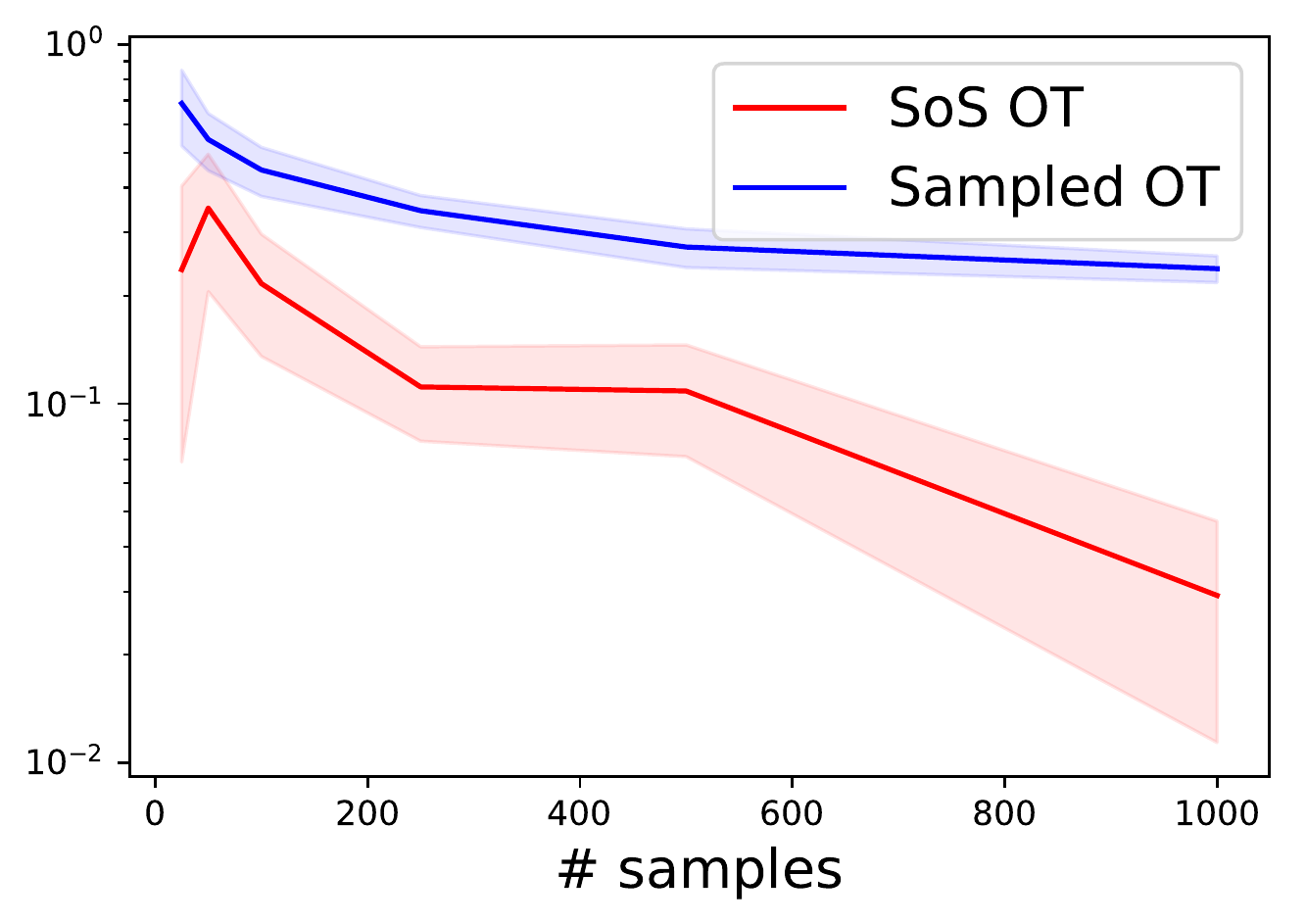}
     \end{subfigure}
     \hfill
         \begin{subfigure}[b]{0.45\textwidth}
         \centering
         \includegraphics[width=\textwidth]{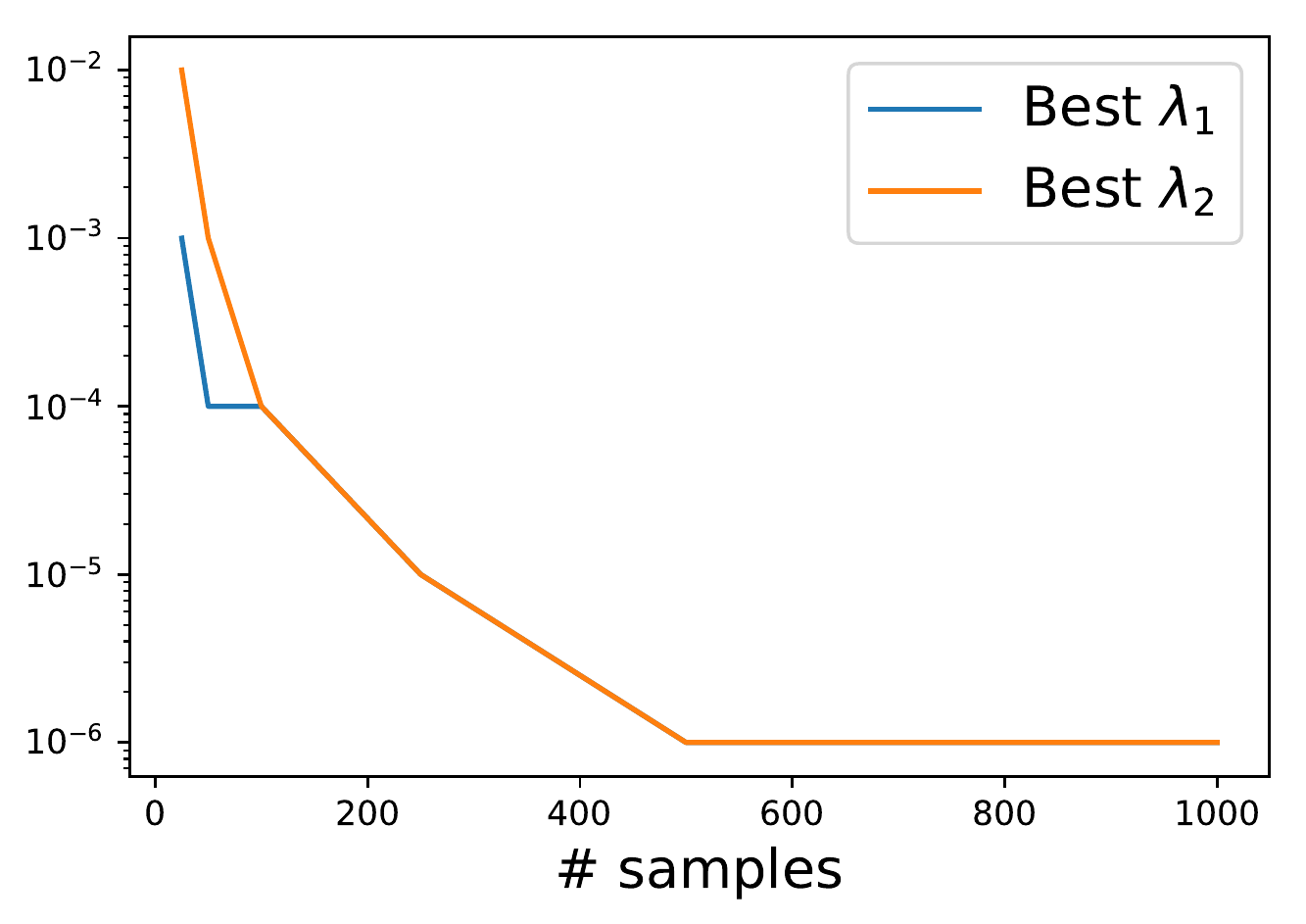}
     \end{subfigure}
     \begin{subfigure}[b]{0.45\textwidth}
         \centering
         \includegraphics[width=\textwidth]{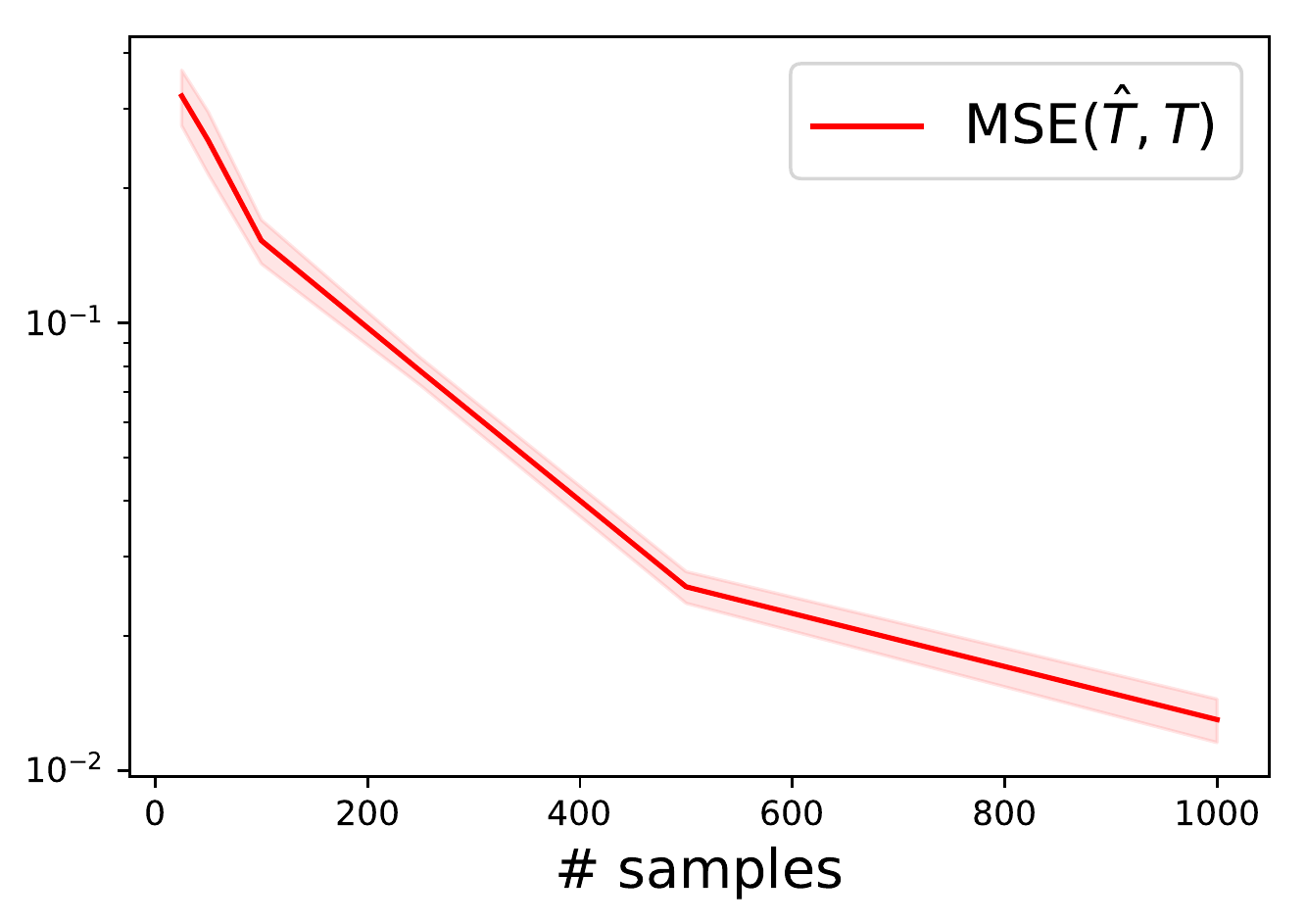}
     \end{subfigure}
     \hfill
         \begin{subfigure}[b]{0.45\textwidth}
         \centering
         \includegraphics[width=\textwidth]{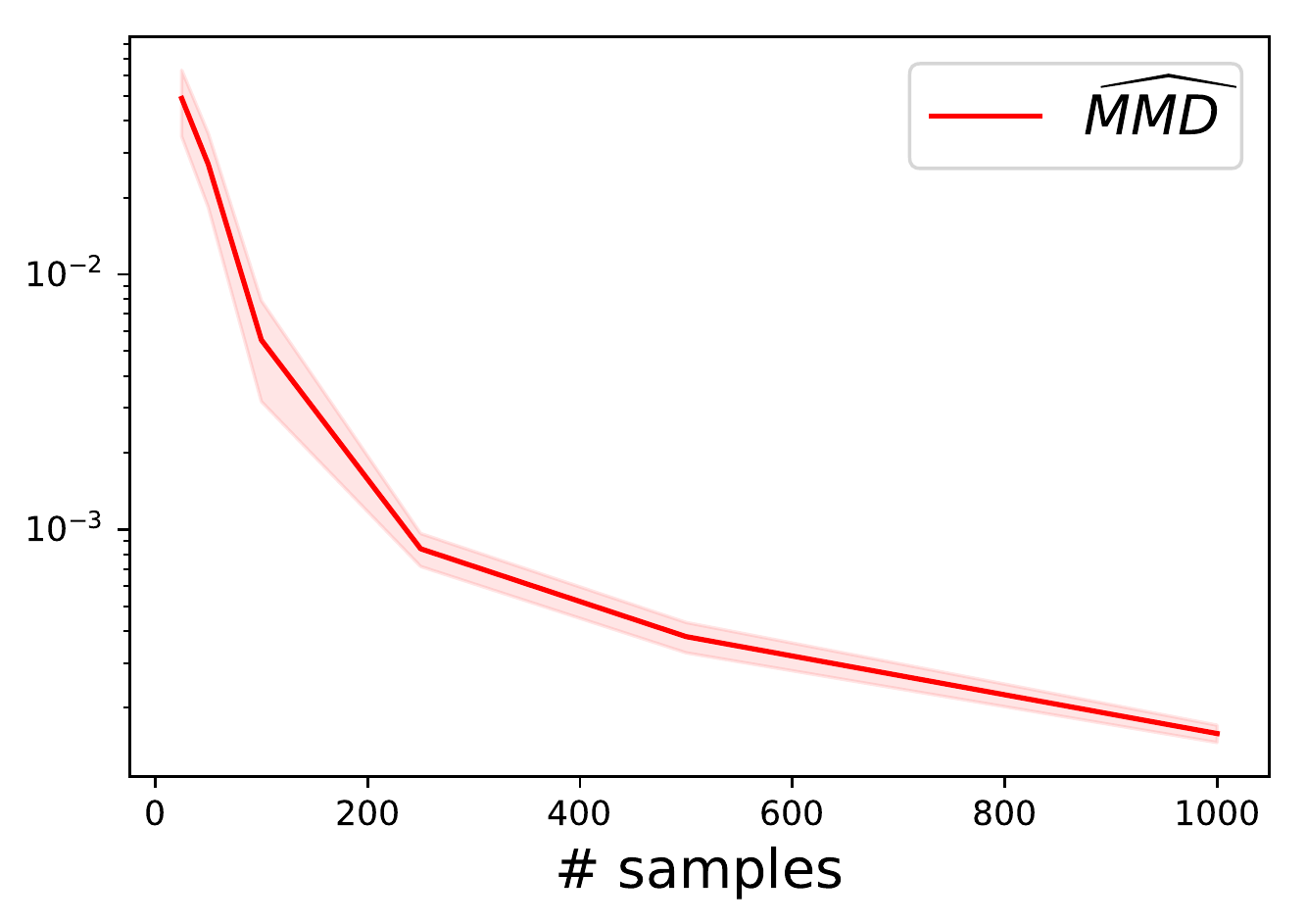}
     \end{subfigure}
    \caption{8D Gaussian data. {\em(Top left)} OT error $|\widehat{OT} - OT|$, log scale. {\em (Top right)} Best hyperparameters (selected via gridsearch), log scale. {\em (Bottom left)} Transportation map mean square error, log scale. {\em (Bottom right)} MMD between transported samples, log scale. Shaded areas correspond to $\pm$ std. Algorithm: accelerated gradient descent ($\delta = 10^3$), Sobolev kernel ($s=20$, bandwidth $=1$), with Nystr\"om approximation (rank = $100$). Filling pairs $(\tx_i, \ty_i)$ are drawn from $\mu \otimes \nu$. The number of filling pairs is equal to the number of $\mu$ and $\nu$ samples, reported on the x-axis. Note that the std in the top left is actually decreasing, but that the size of the shaded area visually increases due to logarithmic scaling.}
    \label{fig:8D_gaussian}
\end{figure}

We now report empirical evaluation of our estimators. In \Cref{fig:2D_gaussian,fig:4D_gaussian,fig:8D_gaussian}, we sample data from two Gaussian distributions (whose covariances follow a Wishart distribution), and solve problem \eqref{eq:w_hat_norm_pen_dual} for hyperparameters pairs $(\lambda_1, \lambda_2)$ on the $\{10^{-7}, 10^{-6}, 10^{-5}, 10^{-4}, 10^{-3}, 10^{-2}\}^2$ grid. As explained above, we then compute the forward and backward transportation maps \eqref{eq:forward_backward_map} corresponding to each $(\la_1, \la_2)$ pair, and report the performance of the pair minimizing \eqref{eq:mmd_criterion}, varying the number of samples from $25$ to $1000$, averaging over $20$ random draws. For each number of samples, we report the error of the OT distance estimator (like \cite{vacher2021dimensionfree}) compared to the plugin estimator,
the mean squared error of $\hat{T_1}$ and $\hat{T_2}$ compared to the ground truths $T_1$ and $T_2$ (which admit an analytical expression \citep[see][]{takatsu2011gaussian}) evaluated on the samples from $\mu\otimes \nu$: 
\begin{equation}\label{eq:MSE_criterion}
    \mathrm{MSE} = \frac{1}{n}\sum_{i=1}^n |\hat{T}_1(x_i) - T_1(x_i)|^2 + \frac{1}{n}\sum_{i=1}^n |\hat{T}_2(y_i) - T_2(y_i)|^2,
\end{equation}
the best hyperparameter pair and the MMD metric \eqref{eq:mmd_criterion}.
We may obtain several takeaways from \Cref{fig:2D_gaussian,fig:4D_gaussian,fig:8D_gaussian}. First, we see that gains of using the kernel SoS approach to OT increase with the dimension: while SoS-OT matches the performance of the sampled estimator in 2D (\cref{fig:2D_gaussian}), it becomes increasingly more efficient than the plugin estimator in 4D (\cref{fig:4D_gaussian}) and 8D (\cref{fig:8D_gaussian}). Second, we observe that the map estimator converges to the true map, and that the decrease of the $L_2$ error \eqref{eq:MSE_criterion} is well correlated with the MMD criterion \eqref{eq:mmd_criterion}. Finally, as expected, the optimal regularization parameters (selected from a gridsearch) decrease as the number of sample increases.

\subsection*{Acknowledgements}

This work was funded in part by the French government under management of Agence Nationale de la Recherche as part of the “Investissements d’avenir”
program, reference ANR-19-P3IA-0001 (PRAIRIE 3IA Institute). We also acknowledge
support from the European Research Council (grants SEQUOIA 724063 and REAL 947908),
and R\'egion Ile-de-France.

\bibliographystyle{plainnat}
\bibliography{biblio}

\begin{thebibliography}{38}
\providecommand{\natexlab}[1]{#1}
\providecommand{\url}[1]{\texttt{#1}}
\expandafter\ifx\csname urlstyle\endcsname\relax
  \providecommand{\doi}[1]{doi: #1}\else
  \providecommand{\doi}{doi: \begingroup \urlstyle{rm}\Url}\fi

\bibitem[Adams and Fournier(2003)]{adams2003sobolev}
Robert~A. Adams and John J.~F. Fournier.
\newblock \emph{Sobolev Spaces}.
\newblock Elsevier, 2003.

\bibitem[Ahuja et~al.(1993)Ahuja, Orlin, and Magnanti]{ahyja1993network}
Ravindra~K Ahuja, James~B Orlin, and Thomas~L Magnanti.
\newblock \emph{Network flows: theory, algorithms, and applications}.
\newblock Prentice-Hall, 1993.

\bibitem[Arjovsky et~al.(2017)Arjovsky, Chintala, and
  Bottou]{arjovsky2017wasserstein}
Martin Arjovsky, Soumith Chintala, and L{\'e}on Bottou.
\newblock Wasserstein generative adversarial networks.
\newblock In \emph{International conference on machine learning}, pages
  214--223. PMLR, 2017.

\bibitem[Bernton et~al.(2017)Bernton, Jacob, Gerber, and
  Robert]{bernton2017inference}
Espen Bernton, Pierre~E Jacob, Mathieu Gerber, and Christian~P Robert.
\newblock Inference in generative models using the wasserstein distance.
\newblock \emph{arXiv preprint arXiv:1701.05146}, 1\penalty0 (8):\penalty0 9,
  2017.

\bibitem[Brenier(1987)]{brenier1987decomposition}
Yann Brenier.
\newblock D{\'e}composition polaire et r{\'e}arrangement monotone des champs de
  vecteurs.
\newblock \emph{CR Acad. Sci. Paris S{\'e}r. I Math.}, 305:\penalty0 805--808,
  1987.

\bibitem[Caponnetto and De~Vito(2007)]{caponnetto2007optimal}
Andrea Caponnetto and Ernesto De~Vito.
\newblock Optimal rates for the regularized least-squares algorithm.
\newblock \emph{Foundations of Computational Mathematics}, 7\penalty0
  (3):\penalty0 331--368, 2007.

\bibitem[Chizat et~al.(2020)Chizat, Roussillon, L{\'e}ger, Vialard, and
  Peyr{\'e}]{chizat2020faster}
Lenaic Chizat, Pierre Roussillon, Flavien L{\'e}ger, Fran{\c{c}}ois-Xavier
  Vialard, and Gabriel Peyr{\'e}.
\newblock Faster {W}asserstein distance estimation with the {S}inkhorn
  divergence.
\newblock \emph{Advances in Neural Information Processing Systems}, 33, 2020.

\bibitem[Courty et~al.(2016)Courty, Flamary, Tuia, and
  Rakotomamonjy]{courty2016optimal}
Nicolas Courty, R{\'e}mi Flamary, Devis Tuia, and Alain Rakotomamonjy.
\newblock Optimal transport for domain adaptation.
\newblock \emph{IEEE transactions on pattern analysis and machine
  intelligence}, 39\penalty0 (9):\penalty0 1853--1865, 2016.

\bibitem[Courty et~al.(2017)Courty, Flamary, Habrard, and
  Rakotomamonjy]{courty2017joint}
Nicolas Courty, R{\'e}mi Flamary, Amaury Habrard, and Alain Rakotomamonjy.
\newblock Joint distribution optimal transportation for domain adaptation.
\newblock \emph{Advances in Neural Information Processing Systems}, pages
  3733--3742, 2017.

\bibitem[Cuturi(2013)]{cuturi2013sinkhorn}
Marco Cuturi.
\newblock Sinkhorn distances: Lightspeed computation of optimal transport.
\newblock \emph{Advances in neural information processing systems},
  26:\penalty0 2292--2300, 2013.

\bibitem[De~Philippis and Figalli(2014)]{philippis2013mongeampre}
Guido De~Philippis and Alessio Figalli.
\newblock The {M}onge--{A}mp{\`e}re equation and its link to optimal
  transportation.
\newblock \emph{Bulletin of the American Mathematical Society}, 51\penalty0
  (4):\penalty0 527--580, 2014.

\bibitem[Deb et~al.(2021)Deb, Ghosal, and Sen]{deb2021rates}
Nabarun Deb, Promit Ghosal, and Bodhisattva Sen.
\newblock Rates of estimation of optimal transport maps using plug-in
  estimators via barycentric projections.
\newblock \emph{Advances in Neural Information Processing Systems}, 34, 2021.

\bibitem[Feydy et~al.(2017)Feydy, Charlier, Vialard, and Peyr{\'e}]{Feydy2017}
Jean Feydy, Benjamin Charlier, Fran{\c{c}}ois-Xavier Vialard, and Gabriel
  Peyr{\'e}.
\newblock Optimal transport for diffeomorphic registration.
\newblock In \emph{International Conference on Medical Image Computing and
  Computer-Assisted Intervention}, pages 291--299. Springer, 2017.

\bibitem[Genevay et~al.(2016)Genevay, Cuturi, Peyr{\'e}, and
  Bach]{genevay2016stochastic}
Aude Genevay, Marco Cuturi, Gabriel Peyr{\'e}, and Francis Bach.
\newblock {Stochastic optimization for large-scale optimal transport}.
\newblock In \emph{{Advances in Neural Information Processing System}},
  volume~30, 2016.

\bibitem[Gretton et~al.(2012)Gretton, Borgwardt, Rasch, Sch{\"o}lkopf, and
  Smola]{gretton2012kernel}
Arthur Gretton, Karsten~M Borgwardt, Malte~J Rasch, Bernhard Sch{\"o}lkopf, and
  Alexander Smola.
\newblock A kernel two-sample test.
\newblock \emph{The Journal of Machine Learning Research}, 13\penalty0
  (1):\penalty0 723--773, 2012.

\bibitem[Gunsilius(2018)]{gunsilius_2021}
Florian~F Gunsilius.
\newblock On the convergence rate of potentials of {B}renier maps.
\newblock \emph{Econometric Theory}, pages 1--37, 2018.

\bibitem[H{\"u}tter and Rigollet(2021)]{hutter2019minimax}
Jan-Christian H{\"u}tter and Philippe Rigollet.
\newblock Minimax estimation of smooth optimal transport maps.
\newblock \emph{The Annals of Statistics}, 49\penalty0 (2):\penalty0
  1166--1194, 2021.

\bibitem[Makkuva et~al.(2020)Makkuva, Taghvaei, Oh, and
  Lee]{makkuva2020optimal}
Ashok Makkuva, Amirhossein Taghvaei, Sewoong Oh, and Jason Lee.
\newblock Optimal transport mapping via input convex neural networks.
\newblock In \emph{International Conference on Machine Learning}, pages
  6672--6681. PMLR, 2020.

\bibitem[Manole et~al.(2021)Manole, Balakrishnan, Niles-Weed, and
  Wasserman]{manole2021plugin}
Tudor Manole, Sivaraman Balakrishnan, Jonathan Niles-Weed, and Larry Wasserman.
\newblock Plugin estimation of smooth optimal transport maps.
\newblock \emph{arXiv preprint arXiv:2107.12364}, 2021.

\bibitem[Marteau-Ferey et~al.(2020)Marteau-Ferey, Bach, and
  Rudi]{marteau2020non}
Ulysse Marteau-Ferey, Francis Bach, and Alessandro Rudi.
\newblock Non-parametric models for non-negative functions.
\newblock \emph{Advances in Neural Information Processing Systems}, 2020.

\bibitem[Meyers and Ziemer(1977)]{meyers_poincare}
Norman~G. Meyers and William~P. Ziemer.
\newblock {Integral inequalities of Poincaré and Wirtinger type for BV
  functions}.
\newblock \emph{American Journal of Mathematics}, 99, 1977.

\bibitem[Narcowich et~al.(2005)Narcowich, Ward, and
  Wendland]{narcowich2005sobolev}
Francis Narcowich, Joseph Ward, and Holger Wendland.
\newblock Sobolev bounds on functions with scattered zeros, with applications
  to radial basis function surface fitting.
\newblock \emph{Mathematics of Computation}, 74\penalty0 (250):\penalty0
  743--763, 2005.

\bibitem[Onken et~al.(2021)Onken, Wu~Fung, Li, and Ruthotto]{onken2021ot}
D~Onken, S~Wu~Fung, Xingjian Li, and L~Ruthotto.
\newblock Ot-flow: Fast and accurate continuous normalizing flows via optimal
  transport.
\newblock In \emph{AAAI Conference on Artificial Intelligence}, volume~35,
  2021.

\bibitem[Paty et~al.(2020)Paty, d’Aspremont, and Cuturi]{paty2020convexity}
Fran{\c{c}}ois-Pierre Paty, Alexandre d’Aspremont, and Marco Cuturi.
\newblock Regularity as regularization: Smooth and strongly convex brenier
  potentials in optimal transport.
\newblock In \emph{International Conference on Artificial Intelligence and
  Statistics}, pages 1222--1232. PMLR, 2020.

\bibitem[Paulsen and Raghupathi(2016)]{paulsen2016introduction}
Vern~I. Paulsen and Mrinal Raghupathi.
\newblock \emph{{An Introduction to the Theory of Reproducing Kernel Hilbert
  Spaces}}, volume 152.
\newblock Cambridge University Press, 2016.

\bibitem[Pooladian and Niles-Weed(2021)]{pooladian2021entropic}
Aram-Alexandre Pooladian and Jonathan Niles-Weed.
\newblock Entropic estimation of optimal transport maps.
\newblock \emph{arXiv preprint arXiv:2109.12004}, 2021.

\bibitem[Rudi et~al.(2015)Rudi, Camoriano, and Rosasco]{rudi2015less}
Alessandro Rudi, Raffaello Camoriano, and Lorenzo Rosasco.
\newblock Less is more: Nystr{\"o}m computational regularization.
\newblock \emph{Advances in Neural Information Processing Systems},
  28:\penalty0 1657--1665, 2015.

\bibitem[Rudi et~al.(2020)Rudi, Marteau-Ferey, and Bach]{rudi2020global}
Alessandro Rudi, Ulysse Marteau-Ferey, and Francis Bach.
\newblock Finding global minima via kernel approximations.
\newblock In \emph{Arxiv preprint arXiv:2012.11978}, 2020.

\bibitem[Salimans et~al.(2018)Salimans, Metaxas, Zhang, and
  Radford]{salimans2018improving}
Tim Salimans, Dimitris Metaxas, Han Zhang, and Alec Radford.
\newblock Improving {GAN}s using optimal transport.
\newblock In \emph{International Conference on Learning Representations}, 2018.

\bibitem[Schiebinger et~al.(2019)Schiebinger, Shu, Tabaka, Cleary, Subramanian,
  Solomon, Gould, Liu, Lin, Berube, et~al.]{schiebinger2019optimal}
Geoffrey Schiebinger, Jian Shu, Marcin Tabaka, Brian Cleary, Vidya Subramanian,
  Aryeh Solomon, Joshua Gould, Siyan Liu, Stacie Lin, Peter Berube, et~al.
\newblock Optimal-transport analysis of single-cell gene expression identifies
  developmental trajectories in reprogramming.
\newblock \emph{Cell}, 176\penalty0 (4):\penalty0 928--943, 2019.

\bibitem[Seguy et~al.(2018)Seguy, Damodaran, Flamary, Courty, Rolet, and
  Blondel]{seguy2018maps}
Vivien Seguy, Bharath~Bhushan Damodaran, Remi Flamary, Nicolas Courty, Antoine
  Rolet, and Mathieu Blondel.
\newblock Large-scale optimal transport and mapping estimation.
\newblock In \emph{{International Conference on Learning Representations}},
  2018.

\bibitem[Steinwart and Christmann(2008)]{steinwart2008support}
Ingo Steinwart and Andreas Christmann.
\newblock \emph{Support Vector Machines}.
\newblock Springer Science \& Business Media, 2008.

\bibitem[Su et~al.(2015)Su, Wang, Shi, Zeng, Sun, Luo, and Gu]{su2015optimal}
Zhengyu Su, Yalin Wang, Rui Shi, Wei Zeng, Jian Sun, Feng Luo, and Xianfeng Gu.
\newblock Optimal mass transport for shape matching and comparison.
\newblock \emph{IEEE transactions on pattern analysis and machine
  intelligence}, 37\penalty0 (11):\penalty0 2246--2259, 2015.

\bibitem[Takatsu(2011)]{takatsu2011gaussian}
Asuka Takatsu.
\newblock {Wasserstein geometry of Gaussian measures}.
\newblock \emph{Osaka Journal of Mathematics}, 48\penalty0 (4):\penalty0 1005
  -- 1026, 2011.

\bibitem[Vacher et~al.(2021)Vacher, Muzellec, Rudi, Bach, and
  Vialard]{vacher2021dimensionfree}
Adrien Vacher, Boris Muzellec, Alessandro Rudi, Francis Bach, and
  Francois-Xavier Vialard.
\newblock A dimension-free computational upper-bound for smooth optimal
  transport estimation.
\newblock \emph{Conference on Learning Theory}, 2021.

\bibitem[Weed and Berthet(2019)]{weed2019estimation}
Jonathan Weed and Quentin Berthet.
\newblock Estimation of smooth densities in {W}asserstein distance.
\newblock In \emph{Conference on Learning Theory}, pages 3118--3119, 2019.

\bibitem[Williams and Seeger(2001)]{williams2001using}
Christopher Williams and Matthias Seeger.
\newblock Using the {N}ystr{\"o}m method to speed up kernel machines.
\newblock \emph{Advances in Neural Information Processing Systems},
  13:\penalty0 682--688, 2001.

\bibitem[Yang et~al.(2020)Yang, Damodaran, Venkatachalapathy, Soylemezoglu,
  Shivashankar, and Uhler]{yang2020predicting}
Karren~Dai Yang, Karthik Damodaran, Saradha Venkatachalapathy, Ali~C
  Soylemezoglu, GV~Shivashankar, and Caroline Uhler.
\newblock Predicting cell lineages using autoencoders and optimal transport.
\newblock \emph{PLoS computational biology}, 16\penalty0 (4):\penalty0
  e1007828, 2020.

\end{thebibliography}


\appendix

\section{Additional Proofs}\label{AppendixAdditionalProofs}

\subsection{Proof of lemma \ref{lemma:j_strg_cvx}}
\begin{proof} First note that the Legendre transform is pointwise convex. For $(f,g)$ and $\lambda \in [0, 1]$, we have for all $y$
\begin{align}
    (\lambda f + (1 - \lambda)g)^{*}(y) & = \sup_{x} x^\top y - (\lambda f(x) + (1 - \lambda)g(x)) \\
    & \leq \lambda f^*(y) + (1 - \lambda)g^*(y) \, .
\end{align}
It follows that the semi-dual functional $J(f) = \langle f, \mu \rangle + \langle f^*, \nu \rangle$ is convex.
\par
Denoting $T_0$ the optimal transport map from $\mu$ to $\nu$, the semi-dual functional can be rewritten as 
$
    J(f) = \langle f ,\mu \rangle + \langle f^* \circ T_0,\mu \rangle\,.
$
Now, the Fenchel-Young inequality on $f$ gives for every couple $(x,y) \in X\times Y$
$
   y^\top x \leq f(x) + f^*(y) \,,
$
 Equality holds for $y = \nabla f(x)$, and in this case, one also have $x \in \partial f^*(\nabla f(x))$. To simplify notations, let us denote $T(x) = \nabla f(x)$. We get 
\begin{equation}\label{EqFenchelEquality}
    f(x) + f^*(T(x)) = T(x)^\top x \,.
\end{equation}
Denoting $f_0$ an optimal potential, the optimality condition applied to $f_0$ gives
$
    f_0(x) + f_0^*(T_0(x)) - T_0(x)^\top x = 0\,\,, \forall x \in \on{Supp}(\mu) 
$
and by integration
$
    J(f_0) = \int  T_0(x)^\top x \, \,\mathrm{d}\mu \,.
$
Therefore, we have
\begin{equation}\label{eq:Jf_minus_Jfstar}
    J(f) - J(f_0) = \int f(x) + f^*(T_0(x)) - T_0(x)^\top x\, \mathrm{d}\mu\,.
\end{equation}
Subtracting \cref{EqFenchelEquality} applied to $f$ pointwise from the integrand in \cref{eq:Jf_minus_Jfstar}, we get 
\begin{equation}
    J(f) - J(f_0) = \int f^*(T_0(x)) - f^*(T(x)) - (T_0(x) - T(x))^{\top} x \, \mathrm{d}\mu\,.
\end{equation}
Applying the case of equality in Fenchel-Young (see above), we have $x \in \partial f^*(T(x))$ where $\partial f^*$ is the sub-gradient of $f^*$. Hence, denoting $z_1 = T_0(x)$ and $z_2 = T(x)$, the integrand has the form
\begin{equation}
\label{eqFormIntegrand}
    f^*(z_1) - f^*(z_2) - (z_1 - z_2)^\top \partial f^*(z_2) \, .
\end{equation}
Since  $f$ is a $C^1$ convex function  with a $M$-Lipschitz gradient, then $f^*$ is $\frac 1M$-strongly convex and \eqref{eqFormIntegrand} is lower bounded as 
\begin{equation}
    f^*(z_1) - f^*(z_2) - (z_1 - z_2)^\top \partial f^*(z_2) \geq \frac{1}{2M} \| z_1 - z_2\|^2 \, ,
\end{equation}
and in particular, we recover
\begin{equation}
    J(f) - J(f_0) \geq \frac{1}{2M} \int \| T_0(x) - T(x) \|^2 \mathrm{d}\mu(x) \, .
\end{equation}
\end{proof}

\subsection{Proof of \cref{PropAdmissibleEmpiricalPotentials}}

\begin{proof} For $E \subset \mathbb{R}^d$ with lipschitz boundary and points $(e_i)_{1 \leq i \leq n}$, define the \textit{fill distance} $h(E, e)$ as
\begin{equation}
    h(E, e) = \sup_{z \in E} \min_{i \in [n]} \| z - e_i\|_2 \, .
\end{equation}
If $f$ has smoothness $m$, the sampling inequalities \citep{narcowich2005sobolev} state that if $\forall i \in [n]$, $f(e_i) = 0$ then there exists a positive constant $B$ depending on the smoothness $m$, the dimension $d$ and the geometry of $E$ such that if $h \leq h_0(m, d, E)$
\begin{equation}
    \sup_{e \in E} |f(e)| \leq B h^{m-d/2} \| f \|_{H^{m}(E)} \, .
\end{equation}
Now, recall that at the optimum, the function
\begin{equation*}
    \hat{\alpha}_n(x,y) := \hat{f}_n(x) + \hat{g}_n(y) - x^\top y - \langle \phi(x,y), \hat{A}_n \phi(x, y) \rangle_{H^{m}(X\times Y)}
\end{equation*}
is such that $\forall i \in [n] , ~ \hat{\alpha}_n(x_i,y_i) = 0$. Applying the previous result yields 
\begin{equation}
    \sup_{(x,y) \in X\times Y} | \hat{\alpha}_n(x,y) | \leq B h(X \times Y, (\hat{x}, \hat{y}))^{m-d} \| \hat{\alpha}_n \|_{H^{m}(X\times Y)} \, .
\end{equation}
Using Lemma 12 from \cite{vacher2021dimensionfree}, if $n \geq n_0(X, Y)$ there exists a universal constant $C_0$ such that we have with probability at least $1-\delta$ 
\begin{equation}
     h(X \times Y, (\hat{x}, \hat{y})) \leq C_0 \biggl( \frac{\log(\frac{n}{\delta})}{n} \biggr)^{\frac{1}{2d}} \, .
\end{equation}
There remains to upper bound $\| \hat{\alpha}_n \|_{H^{m}(X\times Y)}$
\begin{equation}
\begin{split}
    \| \hat{\alpha}_n \|_{H^{m}(X\times Y)} \leq & \| \hat{f}_n(\cdot) \|_{H^{m}(X \times Y)} + \| \hat{g}_n(\cdot) \|_{H^{m}(X \times Y)} + \|c \|_{H^{m}(X \times Y)} \\
    & + \|\langle \phi(\cdot, \cdot), \hat{A}_n \phi(\cdot, \cdot) \rangle_{H^{m}(X\times Y)} \|_{H^{m}(X \times Y)} \, ,
\end{split}
\end{equation}
where $c(x,y) = x^\top y$. The term $\| \hat{f}_n(\cdot) \|_{H^{m}(X \times Y)}$ is upper bounded by $C_1 \| \hat{f}_n(\cdot) \|_{H^{m+2}(X)}$ where $C_1$ is the embedding constant of $H^{m+2}(X)$ in $H^{m}(X)$. Using Lemma 9 in \citet{rudi2020global}, the term $\|\langle \phi(\cdot, \cdot), \hat{A}_n \phi(\cdot, \cdot) \rangle_{H^{m}(X\times Y)}\|_{H^{m}(X \times Y)}$ can be upper bounded by $C_2 \tr(\hat{A}_n)$ where $C_2$ is a constant depending on $X, Y, m, d$. Hence we obtain
\begin{equation}
     \| \hat{\alpha}_n \|_{H^{m}(X\times Y)} \leq C_1  \| \hat{f}_n \|_{H^{m+2}(X)} + C_3\| \hat{g}_n \|_{H^{m+2}(Y)} + C_2 \tr(\hat{A}_n) + G \, ,
\end{equation}
where $C_3$ is the embedding constant of $H^{m+2}(Y)$ in $H^{m}(Y)$ and $G =  \| c \|_{H^{m}(X \times Y)} < \infty $. In particular, since $\forall (x, y) \in X \times Y, \langle \phi(x,y), \hat{A}_n \phi(x, y) \rangle_{H^{m}(X\times Y)} \geq 0$, we obtain that for $n \geq n_0(X, Y, m, d)$, we have with probability at least $1 - \delta $
\begin{equation}
    \forall (x, y) \in X \times Y, ~\hat{f}_n(x) + \hat{g}_n(y) + C_0 C_4 \biggl( \frac{\log(\frac{n}{\delta})}{n} \biggr)^{\frac{m-d}{2d}}( \sqrt{2} \hat{R}_n + \tr(\hat{A}_n) + \frac{G}{C_4}) \geq x^\top y \, ,
\end{equation}
where $C_4 = \max (C_1, C_2, C_3)$ and $\hat{R}_n^2 = \| \hat{f}_n \|_{H^{m+2}(X)}^2 + \| \hat{g}_n \|_{H^{m+2}(Y)}^2$. 
\end{proof}

\subsection{Proof of \Cref{lemma:scal_ub}}

\begin{proof}
In order to apply the Poincaré inequality, we need to re-normalize the function $u-v$ such that it integrates to $0$. Denoting the residual $r = \int_X u(x) - v(x) dx$, we have that $\langle u - v, \mu - \hat{\mu} \rangle = \langle u - v - r, \mu - \hat{\mu} \rangle$ since $\mu$ and $\hat \mu$ have the same total mass. For a probability measure $\alpha$ and a RKHS $H$ with kernel $k$, we define kernel mean embedding $w_\alpha = \mathbb{E}_{\alpha}(k(X, .)) \in H$. With this notation, using the reproducing property, we can re write $\langle u - v - r, \mu - \hat{\mu} \rangle$ as $\langle u - v - r,  w_\mu - w_{\hat{\mu}} \rangle_{H^{d/2 + \varepsilon}(X)}$. Using Cauchy-Schwarz, we have the upper-bound
\begin{equation}
    \langle u - v, \mu - \hat{\mu} \rangle \leq \|u - v - r\|_{H^{d/2 + \varepsilon}(X)} \|w_\mu - w_{\hat{\mu}}\|_{H^{d/2 + \varepsilon}(X)} \, .
\end{equation}
We can apply the Pinelis inequality \citep[see Proposition 2 of][]{caponnetto2007optimal} to control the second term. We have with probability at least $1 - \delta$ 
\begin{equation}
    \|w_\mu - w_{\hat{\mu}}\|_{H^{d/2 + \varepsilon}(X)} \leq k_{d, \varepsilon} \frac{\log(\frac{2}{\delta})}{\sqrt{n}} \, ,
\end{equation}
where $k_{d, \varepsilon}$ is a constant that goes to infinity as $1/\varepsilon$ when $\varepsilon \to 0$. Now let us deal with the first term. There exists a constant $C$ such that  
\begin{equation}
    \|u - v - r\|_{H^{d/2 + \varepsilon}(X)} \leq C (\|\nabla u - \nabla v\|_{H^{d/2 + \varepsilon - 1}(X)} + \| u - v - r\|_{L^2(X)}) \, .
\end{equation}
For the second term, Poincaré-Wirtinger inequality \citep{meyers_poincare} states that there exists a constant $C'$ such that $\| u - v - r\|_{L^2(X)} \leq C'\|\nabla u - \nabla v\|_{L^2(X)}$. For the first term, we apply the Gagliardo-Nirenberg inequality which yields
\begin{equation}
    \|\nabla u - \nabla v\|_{H^{d/2 + \varepsilon - 1}(X)} \leq C_1''\|\nabla u - \nabla v\|_{H^{m+1}(X)}^{\frac{d/2 + \varepsilon - 1}{m+1}}\|\nabla u - \nabla v\|_{L^2(X)}^{1 - \frac{d/2 + \varepsilon - 1}{m+1}} + C_2''\|\nabla u - \nabla v\|_{L^2(X)} \, ,
\end{equation}
for $C_1'', C_2''$ constants independent of $u,v$ and $\delta$. Conversely, there exists $C'''$ such that
\begin{equation}
    \|\nabla u - \nabla v\|_{H^{m+1}(X)} \leq C'''\| u - v\|_{H^{m+2}(X)} \, .
\end{equation}
Finally, denoting $\mu_0 = \inf_{x \in X} \frac{d \mu(x)}{d \lambda(x)}$, we have $\|.\|_{L^2(X)} \leq \frac{\| . \|_{L^2(\mu)}}{\mu_0}$ and we obtain with probability $1-\delta$
\begin{equation*}
        \langle u - v, \mu - \hat{\mu} \rangle \leq K_0\frac{\log(\frac{2}{\delta})}{\sqrt{n}}\biggl( K_1\|u - v\|_{H^{m+2}(X)}^{\frac{d/2 + \varepsilon - 1}{m+1}}\| \nabla u - \nabla v \|_{L^2(\mu)}^{\frac{m+2-d/2-\varepsilon}{m+1}} + K_2 \|\nabla u - \nabla v\|_{L^2(\mu)} \biggr) \, ,
\end{equation*}
where $K_0 =  k_{d, \varepsilon}C$, $K_1 = \frac{(C''')^{\frac{d/2 + \varepsilon - 1}{m+1}}}{\mu_0^{\frac{m+2-d/2-\varepsilon}{m+1}}}$ and $K_2 =  \frac{C(C' + C_2'')}{\mu_0}$. Denoting $C_\mu = K_0(K_1 + K_2)$, we get
\begin{equation}
     \langle u - v, \mu - \hat{\mu} \rangle \leq C_\mu\frac{\log(\frac{2}{\delta})}{\sqrt{n}}\biggl(\|u - v\|_{H^{m+2}(X)}^{\frac{d/2 + \varepsilon - 1}{m+1}}\| \nabla u - \nabla v \|_{L^2(\mu)}^{\frac{m+2-d/2-\varepsilon}{m+1}} + \|\nabla u - \nabla v\|_{L^2(\mu)}\biggr) \, .
\end{equation}

\end{proof}

\subsection{Proof of \Cref{prop:grad_lip_ub}}

\begin{proof}
Recall that $\hat{L}_f = \| \tilde{f}_n(\hat{t}_f) \|_{W^{2, \infty}(X)}$ and that $ \tilde{f}_n(\hat{t}_f) = f_* + \hat{t}_f(\hat{f}_n - f_*)$. Hence $\hat{L}_f$ can be upper-bounded as
\begin{equation}
    \hat{L}_f \leq \|f_*\|_{W^{2, \infty}(X)} +  \hat{t}_f \|\hat{f}_n - f_*\|_{W^{2, \infty}(X)} \, ,
\end{equation}
and as a consequence $\frac{\hat{L}_f}{\hat{t}_f} \leq \frac{\|f_*\|_{W^{2, \infty}(X)}}{\hat{t}_f} + \|\hat{f}_n\|_{W^{2, \infty}(X)} + \|f_*\|_{W^{2, \infty}(X)}$. Now recall that $\hat{t}_f$ is given by
\begin{equation}
    \hat{t}_f = \min \biggl(1, \frac{\gamma }{2 \|\hat{f}_n - f_*\|_{W^{2, \infty}(X)}}\biggr) \, ,
\end{equation}
and in particular 
\begin{equation}
    \frac{1}{ \hat{t}_f} \leq \frac{2 \|\hat{f}_n - f_*\|_{W^{2, \infty}(X)}}{\gamma} + 1 \leq \frac{2 \biggl(\|\hat{f}_n \|_{W^{2, \infty}(X)} + \|f_*\|_{W^{2, \infty}(X)}\biggr)}{\gamma} + 1 \, ,
\end{equation}
which yields
\begin{align}
    \frac{\hat{L}_f}{\hat{t}_f} & \leq \|f_*\|_{W^{2, \infty}(X)}\biggl(\frac{2 \|f_*\|_{W^{2, \infty}(X)}}{\gamma} + 2\biggr) + \|\hat{f}_n\|_{W^{2, \infty}(X)}\biggl(\frac{2 \|f_*\|_{W^{2, \infty}(X)}}{\gamma} + 1\biggr) \\
    & \leq  2(\|f_*\|_{W^{2, \infty}(X)} + \|\hat{f}_n\|_{W^{2, \infty}(X)})\biggl(\frac{ \|f_*\|_{W^{2, \infty}(X)}}{\gamma} + 1\biggr) \\
    & \leq 2K_X (\|f_*\|_{H^{m+2}(X)} + \|\hat{f}_n\|_{H^{m+2}(X)})\biggl(\frac{ \|f_*\|_{W^{2, \infty}(X)}}{\gamma} + 1\biggr) \, ,
\end{align}
where $K_X$ is the embedding constant of $W^{2, \infty}(X)$ into $H^{m+2}(X)$ \citep{adams2003sobolev}.
\end{proof}

\subsection{Proof of \Cref{prop:asymptotic_behavior}}

\begin{proof} The objective of the proof is to find a parameter $\lambda_n$ as small as possible that keeps the quantities $\hat{R}_n$ and $\tr(\hat{A}_n)$ bounded ; the rate in $\lambda_n$ immediately follows the boundedness.
\par 
Denoting $b_n = \hat{R}_n + R$ and $c_n = \lambda_n(\tr(A_*) + R^2) + \hat{\kappa}_{n, \delta}$, we have the following system of inequalities
\begin{equation}\label{eq:cases_raw}
    \begin{cases}
a_n^2 \leq KC' b_n \biggl[ \frac{\log(\frac{2}{\delta})}{\sqrt{n}}\biggl(b_n^{\frac{d/2 + \varepsilon - 1}{m+1}} a_n^{\frac{m+2 - d/2 - \varepsilon}{m+1}} + a_n\biggr) + c_n \biggr] \\
 \lambda_n (\hat{R}_n^2 + \tr(\hat{A}_n)) \leq (2 C' + 1)\biggl[ \frac{\log(\frac{2}{\delta})}{\sqrt{n}}\biggl( b_n^{\frac{d/2 + \varepsilon - 1}{m+1}} a_n^{\frac{m+2 - d/2 - \varepsilon}{m+1}}+ a_n \biggr) +
    c_n \biggr]\,.
\end{cases}
\end{equation}

We want to split the analysis into two parts: the indexes for which 
\begin{equation}
    \begin{cases}
\frac{\log(\frac{2}{\delta})}{\sqrt{n}}( b_n^{\frac{d/2 + \varepsilon - 1}{m+1}} a_n^{\frac{m+2 - d/2 - \varepsilon}{m+1}}+ a_n ) > c_n \text{ (Case 1) } \\
\frac{\log(\frac{2}{\delta})}{\sqrt{n}}( b_n^{\frac{d/2 + \varepsilon - 1}{m+1}} a_n^{\frac{m+2 - d/2 - \varepsilon}{m+1}}+ a_n ) \leq c_n \text{ (Case 2) } \, .
\end{cases}
\end{equation}

Again, in the first case, we sub-split the analysis in two parts: the indexes for which
\begin{equation}
    \begin{cases}
a_n > b_n^{\frac{d/2 + \varepsilon - 1}{m+1}} a_n^{\frac{m+2 - d/2 - \varepsilon}{m+1}} \text{ (Case 1a) } \\
a_n \leq b_n^{\frac{d/2 + \varepsilon - 1}{m+1}} a_n^{\frac{m+2 - d/2 - \varepsilon}{m+1}} \text{ (Case 1b) } \, .
\end{cases}
\end{equation}

\paragraph{Case 1a.} On these indexes, we can re-write Equation \eqref{eq:cases_raw} as 
\begin{equation}
    \begin{cases}
a_n^2 \leq 4KC' b_n \frac{\log(\frac{2}{\delta})}{\sqrt{n}} a_n\\
 \lambda_n (\hat{R}_n^2 + \tr(\hat{A}_n)) \leq 4 (2 C' + 1) \frac{\log(\frac{2}{\delta})}{\sqrt{n}}a_n \, ,
\end{cases}
\end{equation}
Combining these two equations yields
\begin{equation}
    \lambda_n (\hat{R}_n^2 + \tr(\hat{A}_n)) \leq 16K(2 C' + 1)C' \frac{\log(\frac{2}{\delta})^2}{n} b_n \, .
\end{equation}
If we set $\lambda_n \geq \frac{\log(\frac{2}{\delta})}{n}$, we recover 
\begin{equation}
    \hat{R}_n^2 + \tr(\hat{A}_n) \leq  16K(2 C' + 1)C'(\hat{R}_n + R) \, ,
\end{equation}
which implies that $\hat{R}_n$ and $\tr(\hat{A}_n)$ are bounded independently on $\delta$ and that
\begin{equation}
    a_n^2 \leq (4KC' b_*)^2 \frac{\log(\frac{2}{\delta})^2}{n} \, ,
\end{equation}
where $b_* = \sup_n b_n$.

\paragraph{Case 1b.} On these indexes, the inequalities we obtain are 
\begin{equation}
    \begin{cases}
a_n^2 \leq 4 K C'\frac{\log(\frac{2}{\delta})}{\sqrt{n}} b_n^{\frac{m+d/2+\varepsilon}{m+1}} a_n^{\frac{m+2 - d/2 - \varepsilon}{m+1}} \\
 \lambda_n (\hat{R}_n^2 + \tr(\hat{A}_n)) \leq 4(2 C' + 1) \frac{\log(\frac{2}{\delta})}{\sqrt{n}} b_n^{\frac{d/2 + \varepsilon - 1}{m+1}} a_n^{\frac{m+2 - d/2 - \varepsilon}{m+1}} \, .
\end{cases}
\end{equation}
The first equation implies that $a_n^{\frac{m + d/2 + \varepsilon}{m+1}} \leq 4 KC'\frac{\log(\frac{2}{\delta})}{\sqrt{n}}  b_n^{\frac{m+d/2+\varepsilon}{m+1}}$ which also gives
\begin{equation}
    a_n^{\frac{m + 2 - d/2 - \varepsilon}{m+1}} \leq (4 KC')^{\frac{m + 2 - d/2 - \varepsilon}{m+d/2+\varepsilon}} b_n^{\frac{m + 2 - d/2 - \varepsilon}{m+1}}\biggl(\frac{\log(\frac{2}{\delta})^2}{n}\biggr)^{\frac{m + 2 - d/2 - \varepsilon}{2m+d+2\varepsilon}} \, .
\end{equation}
Hence the upper bound we obtain on $\lambda_n (\hat{R}_n^2 + \tr(\hat{A}_n))$ is 
\begin{align}
    \lambda_n (\hat{R}_n^2 + \tr(\hat{A}_n)) & \leq 4(2 C' + 1) (4 KC')^{\frac{m + 2 - d/2 - \varepsilon}{m+d/2+\varepsilon}} b_n \biggl(\frac{\log(\frac{2}{\delta})^2}{n}\biggr)^{\frac{1}{2}(1 + \frac{m + 2 - d/2 - \varepsilon}{m+d/2+\varepsilon})} \\
    & = 4(2 C' + 1) (4K C')^{\frac{m + 2 - d/2 - \varepsilon}{m+d/2+\varepsilon}} b_n \biggl(\frac{\log(\frac{2}{\delta})^2}{n}\biggr)^{\frac{m+1}{m+d/2+\varepsilon}} \, .
\end{align}
Choosing $\lambda_n \geq \biggl(\frac{\log(\frac{2}{\delta})^2}{n}\biggr)^{\frac{m+1}{m+d/2+\varepsilon}} $ gives that $\hat{R}_n$ and $\tr(\hat{A}_n)$ are bounded (independently on $\delta$) and yields the rate
\begin{equation}
    a_n^2 \leq (4 KC')^{\frac{m+1}{2m + d + 2 \varepsilon}} b_* \biggl(\frac{\log(\frac{2}{\delta})^2}{n}\biggr)^{\frac{m+1}{m+d/2+\varepsilon}} \, .
\end{equation}

\paragraph{Case 2.} On these indexes, the second equation of \eqref{eq:cases_raw} becomes 
\begin{equation}
     \lambda_n (\hat{R}_n^2 + \tr(\hat{A}_n)) \leq 2(2 C' + 1)c_n \, .
\end{equation}
Now recall that $c_n$ is given by
\begin{equation}
    c_n = \lambda_n (R^2 + \tr(A_*)) + C_1(\tr(\hat{A}_n) + \hat{R}_n + G)\biggl(\frac{\log(\frac{n}{\delta})}{n}\biggr)^{\frac{m-d}{2d}} \, ,
\end{equation}
where $C_1, G$ are constants that do not depend of $n, \delta$. Hence, if we choose $\lambda_n$ such that $\lambda_n \geq 4C_1(2 C' + 1) \biggl(\frac{\log(\frac{n}{\delta})}{n}\biggr)^{\frac{m-d}{2d}}$, the quantities $\hat{R}_n$ and $\tr(\hat{A}_n)$ are bounded independently on $\delta$ and we recover $a_n$ of the form
\begin{equation}
    a_n^2 \leq K' \biggl(\frac{\log(\frac{n}{\delta})}{n}\biggr)^{\frac{m-d}{2d}} \, ,
\end{equation}
where $K'$ is a constant independent on $n$ and $\delta$.

\paragraph{Conclusion.} If we set $\lambda_n =  \biggl(\frac{\log(\frac{2}{\delta})^2}{n}\biggr)^{\frac{m+1}{m+d/2+\varepsilon}}+ 4C_1(2 C' + 1) \biggl(\frac{\log(\frac{n}{\delta})}{n}\biggr)^{\frac{m-d}{2d}}$, we have that in any case, the energies are bounded and we get 
\begin{equation}
    a_n^2 \leq C \lambda_n \, ,
\end{equation}
where $C$ is a constant independent on $n$ and $\delta$.
\end{proof} 

\section{Proofs from \Cref{sec:algos}}\label{sec:algo_proofs}

\subsection{Proof of \Cref{lemma:frob_pen_dual}}\label{proof:frob_pen_dual}

\begin{proof}
 Not including the PSD constraints on $\bB$, problem \eqref{eq:EmpiricalDualBrenier_finite_dim} admits the following Lagrangian:
\begin{align}\label{eq:frob_lagrangian}
    \Lcal(f, g, \bB, \gamma) ~ =~& \scal{f}{\hat{w}_\mu}_\hhx + \scal{g}{\hat{w}_\nu}_\hhy + \frac{\la_1}{2} \|\bB\|_F^2 + \la_2(\|f\|^2_\hhx + \|g\|^2_\hhy) \\ 
    & + \sum_{j=1}^\ell \gamma_j (\Phi_j^T\bB\Phi_j - f(\tilde{x}_j) - g(\tilde{y}_j)  + \dotp{\tilde{x}_j}{\tilde{y}_j}).
\end{align}
 Canceling the gradients in $f$ and $g$, we get 
\begin{align}\label{eq:fg_opt_proof}
    \begin{split}
    f &= \frac{1}{2\lambda_2}(\sum_{i=1}^\ell \gamma_i \phi_X(\tilde{x}_i) - \hat{w}_\mu) \\
    g &= \frac{1}{2\lambda_2}(\sum_{i=1}^\ell \gamma_i \phi_Y(\tilde{y}_i) - \hat{w}_\nu).
    \end{split}
\end{align}
Let us now derive the optimality condition on $\bB$: completing the square, we have
\begin{align}\label{eq:B_opt_proof}
\begin{split}
    \underset{\bB\in\SS_+(\RR^\ell)}{\inf} \sum_{i=1}^\ell\gamma_{i}\Phi_i^T\bB\Phi_i + \frac{\la_1}{2} \| \bB\|_F^2
    &= \underset{\bB\in\SS_+(\RR^\ell)}{\inf} \frac{\la_1}{2} \dotp{\bB}{\bB + \frac{2}{\lambda_1}\sum_{i=1}^\ell\gamma_{i}\Phi_i\Phi_i^T}\\
    & = -\frac{1}{2\la_1} \| (- \sum_{i=1}^\ell \gamma_i \Phi_i \Phi_i^T)_+\|_F^2.
\end{split}
\end{align}
Plugging \cref{eq:fg_opt_proof} and \cref{eq:B_opt_proof} into \cref{eq:frob_lagrangian}, we obtain \cref{eq:w_hat_norm_dual}.
\end{proof}

\subsection{Proof of \Cref{prop:relaxed_dual}}\label{proof:relaxed_dual}

\begin{proof}
\textbf{Dual formulation.} It holds
\begin{align}\label{eq:primal-dual}
\begin{split}
    \inf_{\substack{f \in \hhx, g\in \hhy, \\  \bB \in \pdm{\RR^d}}} ~~~ & \scal{f}{\hat{w}_\mu}_\hhx + \scal{g}{\hat{w}_\nu}_\hhy + \frac{\la_1}{2} \|\bB\|_F^2 + \la_2(\|f\|^2_\hhx + \|g\|^2_\hhy) \\ 
    &+ \frac{\delta}{2\ell} \sum_{j=1}^\ell  (f(\tilde{x}_j) + g(\tilde{y}_j) - \tilde{x}_j \cdot \tilde{y}_j -  \Phi_j^T\bB\Phi_j)^2,\\
    = \sup_{\gamma \in \RR^\ell} \inf_{\substack{f \in \hhx, g\in \hhy, \\  \bB \in \pdm{\RR^d}}} ~~~ & \scal{f}{\hat{w}_\mu}_\hhx + \scal{g}{\hat{w}_\nu}_\hhy + \frac{\la_1}{2} \|\bB\|_F^2 + \la_2(\|f\|^2_\hhx + \|g\|^2_\hhy) \\ 
    & + \sum_{j=1}^\ell \gamma_j (\Phi_j^T\bB\Phi_j - f(\tilde{x}_j) - g(\tilde{y}_j)  + \tilde{x}_j \cdot \tilde{y}_j) - \frac{\ell}{2\delta}\| \gamma\|^2.
\end{split}
\end{align}
Indeed, the second problem above is a convex-concave min-max problem, and inverting min and max and solving for $\gamma$ directly yields the original problem. Let us rewrite the inner $\inf$ as a function of $\gamma$.  
Canceling the gradients in $f$ and $g$, we get 
\begin{align}
    \begin{split}
    f &= \frac{1}{2\lambda_2}(\sum_{i=1}^\ell \gamma_i \phi_X(\tilde{x}_i) - \hat{w}_\mu) \\
    g &= \frac{1}{2\lambda_2}(\sum_{i=1}^\ell \gamma_i \phi_Y(\tilde{y}_i) - \hat{w}_\nu),
    \end{split}
\end{align}
i.e. the primal-dual relations \cref{eq:fg_opt}. Let us now derive the optimality condition on $\bB$: as in the proof of of \Cref{lemma:frob_pen_dual}, we have
\begin{align}\label{eq:B_opt}
\begin{split}
    \underset{\bB\in\SS_+(\RR^\ell)}{\inf} \sum_{i=1}^\ell\gamma_{i}\Phi_i^T\bB\Phi_i + \frac{\la_1}{2} \| \bB\|_F^2
    & = -\frac{1}{2\la_1} \| (- \sum_{i=1}^\ell \gamma_i \Phi_i \Phi_i^T)_+\|_F^2.
\end{split}
\end{align}
Plugging \cref{eq:fg_opt} and \cref{eq:B_opt} into \cref{eq:primal-dual}, we obtain \cref{eq:w_hat_norm_pen_dual}.
Let us now derive the smoothness and strong convexity constants of
\begin{equation}
    H(\gamma) \defeq \frac{1}{4\lambda_2} \gamma^T {\bf Q} \gamma - \frac{1}{2\lambda_2}\sum_{j=1}^\ell \gamma_j z_j + \frac{1}{2\la_1} \| (- \sum_{i=1}^\ell \gamma_i \Phi_i \Phi_i^T)_+\|_F^2 + \frac{\ell}{2\delta} \|\gamma\|^2 + \frac{q^2}{4\la_2}.
\end{equation}
\textbf{Strong convexity.}
Let us recall that $H$ is $\alpha$-strongly convex i.f.f.\ $\forall \gamma, \gamma', \alpha \|\gamma - \gamma'\|^2 \leq \nabla \dotp{H(\gamma) - \nabla H(\gamma')}{\gamma - \gamma'}$. We have 
\begin{align*}
&\dotp{H(\gamma) - \nabla H(\gamma')}{\gamma - \gamma'} = \frac{l}{\delta} \|\gamma - \gamma'\|^2 + \frac{1}{2\la_2}(\gamma - \gamma')^T\bQ(\gamma - \gamma')
\\&~~~~ + \frac{1}{\la_2}\diag(\Phi^T([\sum_{i=1}^\ell \gamma_i \Phi_i \Phi_i^T]_- - [\sum_{i=1}^\ell \gamma'_i \Phi_i \Phi_i^T]_-)\Phi)^T(\gamma - \gamma').
\end{align*}
Given that the term with negative parts is non-negative but vanishes when the corresponding matrices are positive, we get the lower bound $\alpha \geq \frac{\ell}{\delta} + \frac{1}{2 \la_2}\lambda_{\min}(\bQ)$.\\
\textbf{Smoothness.} Finally, we have 
\begin{align*}
\|H(\gamma) - \nabla H(\gamma')\| &\leq \|\frac{l}{\delta}(\gamma - \gamma')\| + \frac{1}{2\la_2}\|\bQ(\gamma - \gamma')\|
\\&~~~+ \frac{1}{\la_2}\| \diag(\Phi^T([\sum_{i=1}^\ell \gamma_i \Phi_i \Phi_i^T]_- - [\sum_{i=1}^\ell \gamma'_i \Phi_i \Phi_i^T]_-)\Phi)\|.
\end{align*}
Bounded those three terms independently yields 
$$
\|H(\gamma) - \nabla H(\gamma')\| \leq \left(\frac{\ell}{\delta} + \frac{1}{2\la_2} \la_{\max}(\bQ) + \frac{1}{\la_1} \la_{\max}(\bK \circ \bK)\right)\|\gamma - \gamma'\|.  
$$
\end{proof}

\end{document}